\newcommand{\ZB}{\mathbb{Z}}
\newcommand{\RB}{\mathbb{R}}
\newcommand{\gf}{\mathfrak{g}}
\newcommand{\ad}{ad}
\newcommand{\Aff}{Aff}
\newcommand{\Hom}{Hom}
\newcommand{\aff}{\mathfrak{aff}}
\newcommand{\hm}{\mathfrak{hom}}
\newtheorem{thm}{Theorem}[section]
\newtheorem{lemma}[thm]{Lemma}
\newtheorem{prop}[thm]{Theorem}
\begin{document}

\title{Enabling Equivariance for Arbitrary Lie Groups}

\author{Lachlan E. MacDonald\thanks{\tt\small lachlan.macdonald@adelaide.edu.au},\, Sameera Ramasinghe,\, Simon Lucey\\
Australian Institute for Machine Learning\\
University of Adelaide, Australia
}
\maketitle

\begin{abstract}
Although provably robust to translational perturbations, convolutional neural networks (CNNs) are known to suffer from extreme performance degradation when presented at test time with more general geometric transformations of inputs.  Recently, this limitation has motivated a shift in focus from CNNs to Capsule Networks (CapsNets).  However, CapsNets suffer from admitting relatively few theoretical guarantees of invariance.  We introduce a rigourous mathematical framework to permit invariance to any Lie group of warps, exclusively using convolutions (over Lie groups), without the need for capsules.  Previous work on group convolutions has been hampered by strong assumptions about the group, which precludes the application of such techniques to common warps in computer vision such as affine and homographic.  Our framework enables the implementation of group convolutions over \textbf{any} finite-dimensional Lie group. We empirically validate our approach on the benchmark affine-invariant classification task, where we achieve $\sim$30\% improvement in accuracy against conventional CNNs while outperforming most CapsNets. As further illustration of the generality of our framework, we train a homography-convolutional model which achieves superior robustness on a homography-perturbed dataset, where CapsNet results degrade.
\end{abstract}

\section{Introduction}

Symmetry (reversible, composable change) is ubiquitous in the visual reality perceived by humans. It is therefore no surprise that symmetry now occupies a fundamental and increasingly well-understood role in machine learning, particularly in computer vision \cite{gpequi, gaugeequi, steerable1, steerable2, steerable3, steerable4, surjexp, sameeraequi, gdl, coordind}.  \emph{Invariance} with respect to symmetries is particularly desirable in machine learning. Presently this is frequently achieved by training models on large volumes of symmetry-perturbed data.  However, where there is a paucity of data or a need for explainability, architectural approaches such as CapsNets may be preferable \cite{capsule1, capsule2,matrixcaps}.

\begin{figure}
    \centering
    \includegraphics[scale=0.8]{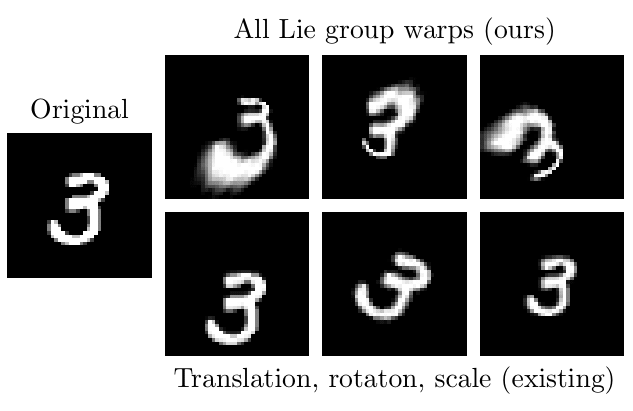}
    \caption{Domains of applicability of our framework and existing convolutional frameworks.}
    \label{fig:my_label}
\end{figure}

CapsNets admit very few theoretical guarantees of invariance \cite{gecaps}.  However they are not the only means of achieving architectural invariance.  Provable architectural invariance to \emph{translational symmetries} is already pervasive in computer vision in the form of CNNs.  The convolutions implemented in modern CNNs are, by definition, sums over responses between filter and feature, as the filter is translated across different positions.  Crucially, these operations are \emph{equivariant} in the sense that translating the input feature and then convolving is the same as first convolving and then translating the response.  Carefully designed downsampling operations \cite{shiftinv} turn this equivariance into \emph{invariance}, so that translation of the input signal does not change the response.  The respect of CNNs for the spatial locality inherent in natural images, together with their equi/in-variance, constitute the inductive biases that have seen CNNs become the industry standard in many computer vision tasks.

More recently still, in the seminal paper by Cohen et. al. \cite{gpequi}, it was observed that the translational convolutions of CNNs could be replaced by more general \emph{group convolutions} that have for decades played an important role in mathematics, particularly in operator theory and noncommutative geometry \cite{ncg, holkar}. Given a group $G$ of warps, a $G$-convolution is a sum over \emph{warps} belonging to $G$ of responses between a warped filter and feature; when $G$ is the group of translations, one recovers ordinary convolution as used in conventional CNNs.  When built into neural networks, these group convolutions achieve better weight sharing and superior performance on image classification tasks than their conventional cousins, especially on datasets consisting of $G$-warped data \cite{gpequi, gaugeequi, steerable1, steerable2, steerable3, steerable4, surjexp, sameeraequi, gdl, coordind}.

One aspect of group-convolutional neural networks which has received relatively little attention is their capacity for a mild form of out-of-distribution generalisation.  Just as well-designed conventional CNNs perform well when tested on translationally perturbed data, a network with $G$-convolutional layers and appropriate downsampling will theoretically perform well on $G$-perturbed data.  As far as the authors are aware, the only place this problem has been studied is in \cite{sphericalcnns}, where spherical CNNs (in which $G$ is the special orthogonal group) are trained on data on a sphere and tested on rotated data with excellent results.  One of the primary reasons for this scarcity of study appears to be the lack of a sufficiently general theoretical framework. While $G$-convolutions can be efficiently implemented for finite groups \cite{gpequi} and a small family of continuous (Lie) groups \cite{sphericalcnns, gaugeequi, surjexp}, current techniques do not allow for the implementation of $G$-convolutions for Lie groups that are common in natural images, such as the affine and homography groups.  Should this hurdle be overcome, $G$-convolutions could constitute an effective tool for common computer vision problems which circumvent the need for data augmentation and huge numbers of parameters.

\noindent\textbf{Contributions:}  First, we introduce a rigourous mathematical framework using tools from topology to enable well-defined convolutional networks over \emph{arbitrary} finite-dimensional Lie groups.  Up until now, convolutional networks could only be implemented over finite groups or Lie groups with surjective exponential map, ruling out many natural warps.  Second, we introduce tools from differential geometry and Lie theory to enable Markov Chain Monte-Carlo (MCMC) sampling from Haar measure for arbitrary Lie groups.  This sampling may be done in a scalable and easily parallelisable fashion, permitting fast approximation of group convolutions.  Sampling from Haar measure had previously not been possible beyond a small family of Lie groups \cite{randommatrices, surjexp}.  Third and finally, we illustrate the validity of our mathematical framework by testing it on the benchmark affine-invariant image classification task, where it outperforms the SOTA $E(2)$-equivariant CNN \cite{steerable4}, and all but one of the existing CapsNet benchmarks.  Finally, we demonstrate the generality of our theory by establishing a new benchmark on homography-invariant image classification, which to the best of our knowledge has not previously been studied.  On this benchmark, our method exhibits stable performance, while that of the CapsNets for which we could obtain code \cite{capsule2, gecaps} degrades\footnote{The code we used for our experiments is available at https://github.com/lemacdonald/equivariant-convolutions}.

\noindent\textbf{Acknowledgements:} We thank the reviewers for their substantive critiques, which have helped to improve the paper.

\section{Related works}

Group equivariant convolutional networks were first introduced in \cite{gpequi}, which considered finite groups only. The theory and methods presented therein were subsequently refined in \cite{steerable1, kondortrivedi}, and generalised to various subgroups of the Euclidean (rigid motion) groups in \cite{sphericalcnns, steerable1, steerable2, steerable3, steerable4}.  Additional equivariance to scaling transformations was demonstrated in \cite{warpedconv, scaleequi}, and equivariant attention using convolutions introduced in \cite{attentionequi}.  In \cite{equimlp}, a method is given for the construction of multilayer perceptrons that are equivariant to any matrix group.  This method uses (non-convolutional) finite-dimensional representations rather than the infinite-dimensional regular representation, and as demonstrated in \cite[Section 4]{steerable1}, there are situations where the latter has advantages in accuracy over the former.  Prior to the present paper, the most general method in the literature which uses the infinite-dimensional regular representation of a group is given by Finzi et. al. \cite{surjexp}.  The approach of Finzi et. al. applies to all Lie groups with surjective exponential map, and via this assumption parameterises filters on a (non-Euclidean) group as functions on the (Euclidean) Lie algebra.  Of all of these works, our own method is most closely related to the latter.  However, where as \cite{surjexp} applies only to Lie groups with surjective exponential map, ruling out for instance the affine and homography groups, our method applies to all finite-dimensional Lie groups.  

Capsules were introduced in \cite{capsule1} as a way of organising neurons so as to better deal with expected variations in input such as position, scale, lighting and orientation.  In \cite{capsule2}, CapsNets were introduced as a way of integrating capsules into CNN architectures. It was shown that using a dynamic routing procedure enabled CapsNets to achieve 79\% test accuracy on the affNIST\footnote{http://www.cs.toronto.edu/~tijmen/affNIST/} (affine-perturbed MNIST) test set, after being trained on unperturbed MNIST.  This is a significant improvement on the test accuracy of 66\% achieved by conventional CNNs.  The task of training on unperturbed MNIST and testing on affNIST has since become a benchmark for affine-invariant image classification \cite{capsule2, sparsecaps, gecaps, affcapsnets, routinguncertainty}.  The present state of the art on affNIST is one of the models given in \cite{routinguncertainty}, which introduces a variational routing procedure to give a test performance of 97.69\% on affNIST after training on MNIST.  Our method outperforms all CapsNet benchmarks bar this state of the art, including the smaller of the models given in \cite{routinguncertainty}.  Precisely how CapsNets achieve such remarkable results, without admitting any theoretical guarantees of out-of-distribution generalisation, is not presently understood.

\section{Preliminaries}

Recall that a \emph{group} is a set $G$ equipped with an associative multiplication $G\times G\ni(u_{1},u_{2})\mapsto u_{1}u_{2}\in G$, a multiplicative identity $e$ such that $ue = eu = u$ for all $u\in G$, and an inversion $G\ni u\mapsto u^{-1}\in G$ for which $uu^{-1} = u^{-1}u = e$ and such that $(u_{1}u_{2})^{-1} = u_{2}^{-1}u_{1}^{-1}$ for all $u,u_{1},u_{2}\in G$.  A group $G$ is said to \emph{act} on a set $X$ if it admits a multiplication $G\times X\ni(u,x)\mapsto u\cdot x\in X$ such that $(u_{1}u_{2})\cdot x = u_{1}\cdot(u_{2}\cdot x)$ for all $u_{1},u_{2}\in G,\,x\in X$.  If $G$ acts on sets $X$ and $Y$, with actions denoted by $\cdot_{X}$ and $\cdot_{Y}$ respectively, then $\phi:X\rightarrow Y$ is said to be \emph{equivariant} if $\phi(g\cdot_{X} x) = g\cdot_{Y}\phi(x)$ for all $g\in G$ and $x\in X$.  An important special case of equivariance is \emph{invariance}, which is when $G$ acts trivially on $Y$; in this case, one has $\phi(g\cdot x) = \phi(x)$ for all $g\in G$ and $x\in X$.  Building a network that is robust to $G$-perturbations of its input amounts to building a network which is \emph{invariant} to $G$ in the above sense.  It is easy to see that if $\phi_{1}$ is equivariant and $\phi_{2}$ is invariant, then $\phi_{2}\circ\phi_{1}$ is invariant -- in this way, equivariance in intermediate layers of a network facilitates invariance of the network as a whole. 

We also recall that a group $G$ is a \emph{Lie group} if it is simultaneously a smooth manifold with respect to which the multiplication and inversion are smooth maps.  If $G$ is a Lie group, there is a vector space $\gf$ called its \emph{Lie algebra}, equipped with an antisymmetric multiplication $[\cdot,\cdot]:\gf\times\gf\rightarrow\gf$ called the \emph{Lie bracket}, and an \emph{exponential map} $\exp:\gf\rightarrow G$, which is one-to-one from a neighbourhood of zero in $\gf$ onto a neighbourhood of the identity in $G$.  We will be particularly concerned with \emph{matrix groups}, which are groups of invertible matrices.  Any matrix group is a Lie group, with multiplication and inversion given by the usual matrix operations, and with Lie algebra given by some space of not-necessarily-invertible matrices on which the Lie bracket is defined by the matrix commutator:
\[
[A,B]:=AB - BA.
\]
For matrix groups, the exponential map is given by the matrix exponential.  In what remains of this section, the reader may restrict to matrix groups, although the theory is valid in full generality.

Convolutions are defined in terms of integrals over Lie groups $G$.  Recall that any Lie group $G$ admits the \emph{left (resp. right) Haar measure} $\mu_{L}$ (resp. $\mu_{R}$) \cite[Section 1.3]{williams}, with respect to which integrals may be performed. Left Haar measure is \emph{left-invariant} in the sense that $\int_{G}f(vu)d\mu_{L}(u) = \int_{G}f(u)d\mu_{L}(u)$ for any $v\in G$ and any integrable function $f$.  Similarly, right Haar measure is \emph{right-invariant} in the sense that $\int_{G}f(uv)d\mu_{R}(u) = \int_{G}f(u)d\mu_{R}(u)$ for all $v\in G$ and integrable functions $f$. These measures are related by $d\mu_{L}(u^{-1}) = d\mu_{R}(u)$.  On the zero-dimensional Lie group $\ZB^{2}$ of integer translations, for instance, Haar measure is simply counting measure -- the Haar measure of any subset of $\ZB^{2}$ is simply the number of points in that subset.

We denote by $C(G;\RB^{K})$ the set of continuous functions $G\rightarrow\RB^{K}$ on a Lie group $G$.  A function $\psi\in C(G;\RB^{K})$ is said to be \emph{compactly supported} if, roughly speaking, it is zero outside a set of finite Haar measure\footnote{Strictly speaking, a $\psi$ being ``compactly supported" means that the topological closure of the set on which $\psi$ is nonzero is topologically compact.  Since Haar measures are Radon measures \cite[Section 1.3]{williams}, compact sets do have finite measure, but the converse is not true in general.}.  We denote by $C_{c}(G;\RB^{K})\subset C(G;\RB^{K})$ the set of compactly supported continuous functions.  A function $\psi$ being compactly supported means in particular that the integrals $\int_{G}\psi(u) d\mu_{L}(u)$ and $\int_{G}\psi(u) d\mu_{R}(u)$ are finite.  As a consequence, letting $\cdot$ denote matrix multiplication, the \emph{convolution} $f*\psi$ of $f\in C_{c}(G;\RB^{K})$ and $\psi\in C_{c}(G;\RB^{K\times L})$ defined by
\begin{align}
f*\psi(u) =& \int_{G}f(v)\cdot\psi(v^{-1}u)d\mu_{L}(v)\label{compconv1} \\ =& \int_{G}f(uv^{-1})\cdot\psi(v)d\mu_{R}(v)\label{compconv2}
\end{align}
is itself a compactly supported and continuous function \cite[Proposition 1.1]{renault}.  The convolution formula of Kondor-Trivedi \cite{kondortrivedi}, who work with \emph{compact} groups, is a special case of this, as is the convolution formula for conventional CNNs
\[
f*\psi(m,n) = \sum_{(i,j)\in\ZB^{2}}f(i,j)\cdot\psi(m-i,n-j)
\]
when $G = \ZB^{2}$.  For $v\in G$ the map $L_{v}:C(G;\RB^{K})\rightarrow C(G;\RB^{K})$ defined by $L_{v}(f)(u):=f(v^{-1}u)$ preserves the subset $C_{c}(G;\RB^{K})$, and satisfies $L_{v}(f)*\psi = L_{v}(f*\psi)$ for all $f\in C(G;\RB^{K})$ and $\psi\in C_{c}(G;\RB^{K\times L})$.  That is, convolutions are \emph{equivariant} to warps of feature maps $f$ by warps in $G$.  It is this equivariance that accounts for the robust performance of $G$-CNNs \cite{gpequi}.  As we will describe in the next section, when working with the affine and homography groups we do not have the luxury of assuming our feature maps to be compactly supported -- in fact, our feature maps must be allowed to be \emph{unbounded} in that they may blow up to infinity away from the identity.  One of our theoretical contributions is showing that convolution is still well-defined in such a case.

An important theorem from Lie group theory is the Schur-Poincar\'{e} formula for the derivative of the exponential map. This formula is the key to our theorem that enables sampling from Haar measure for arbitrary Lie groups. 

\begin{thm}\cite[Theorem 5, Section 1.2]{rossman}\label{dexp}
    Let $\gf$ be the Lie algebra of a finite-dimensional Lie group $G$, and let $t\mapsto\xi(t)$ be a curve in $\gf$.  Then one has
    \[
    \frac{d}{dt}\bigg|_{t=0}\exp(X(t)) = dL_{\exp(X(0))}\,\frac{1-e^{-\ad_{X(0)}}}{\ad_{X(0)}}\frac{d}{dt}\bigg|_{t=0}X(t),
    \]
    where for $X\in\gf$, $\ad_{X}:\gf\rightarrow\gf$ is the linear map defined by $\ad_{X}(Y):=[X,Y]$, $dL_{\exp(X)}$ denotes the derivative of left-multiplication by $\exp(X)\in G$, and
    \[
    \frac{1-e^{-\ad_{X}}}{\ad_{X}} = \sum_{k=0}^{\infty}\frac{(-1)^{k}}{(k+1)!}(\ad_{X})^{k}
    \]
    is a power series in the linear map $\ad_{X}$.\qed
\end{thm}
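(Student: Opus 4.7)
The plan is to prove the equivalent statement that for any $X,Y\in\gf$, the differential of $\exp$ at $X$ applied to $Y$ satisfies
\[
d\exp_{X}(Y) = dL_{\exp(X)}\left(\frac{1-e^{-\ad_{X}}}{\ad_{X}}(Y)\right);
\]
the stated theorem then follows by the chain rule with $X = X(0)$ and $Y = \frac{d}{dt}|_{t=0}X(t)$. I would first carry out the computation for matrix Lie groups, where $\exp$ is the matrix exponential and $dL_{v}$ is left multiplication by $v$. The general case can then be recovered either via Ado's theorem (every finite-dimensional Lie algebra embeds in some $\glf(n,\RB)$) or by re-expressing the argument intrinsically using the identity $\Ad_{\exp(X)} = e^{\ad_{X}}$ on $\gf$.

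The heart of the calculation introduces the auxiliary quantities
\[
A(s) := \frac{\partial}{\partial t}\bigg|_{t=0}\exp(s(X+tY)),\qquad \Psi(s) := \exp(-sX)\cdot A(s),
\]
noting $\Psi(0) = 0$ and $dL_{\exp(X)}(\Psi(1)) = d\exp_{X}(Y)$. Differentiating $\Psi$ in $s$ and using the fundamental ODE $\partial_{s}\exp(sZ) = Z\exp(sZ)$, interchanging the partials in $s$ and $t$, and invoking the commutation of $X$ with $\exp(\pm sX)$, two of three terms cancel and one is left with the clean identity $\Psi'(s) = \exp(-sX)\,Y\,\exp(sX)$.

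The right-hand side equals $\Ad_{\exp(-sX)}(Y) = e^{-s\,\ad_{X}}(Y)$, which can be verified by noting that both sides satisfy the same linear ODE in $s$ with initial value $Y$. Integrating termwise from $0$ to $1$ then gives
\[
\Psi(1) = \int_{0}^{1}e^{-s\,\ad_{X}}(Y)\,ds = \sum_{k=0}^{\infty}\frac{(-1)^{k}}{(k+1)!}(\ad_{X})^{k}(Y),
\]
which is precisely $\frac{1-e^{-\ad_{X}}}{\ad_{X}}(Y)$. Applying $dL_{\exp(X)}$ finishes the proof.

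The step I expect to require the most care is the passage from matrix groups to arbitrary finite-dimensional Lie groups: although the formal manipulations above transfer cleanly under $\Ad_{\exp(X)} = e^{\ad_{X}}$, justifying the interchange of partials for the abstract exponential and identifying $dL_{v}$ with the correct map on tangent vectors must be done carefully. Convergence of the power series for $\frac{1-e^{-\ad_{X}}}{\ad_{X}}$ applied to a single $Y\in\gf$ is automatic since $\ad_{X}$ is a linear endomorphism of the finite-dimensional space $\gf$, so no analytic subtlety arises there.
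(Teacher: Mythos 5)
Your argument is correct, but note that the paper does not prove this statement at all: it is quoted verbatim (with the \qed marking it as cited) from Rossmann, so there is no in-paper proof to compare against. What you give is essentially the standard textbook derivation — and in fact the same strategy as the cited source: set $\Psi(s)=\exp(-sX)\,\partial_t|_{t=0}\exp(s(X+tY))$, show $\Psi'(s)=\Ad_{\exp(-sX)}(Y)=e^{-s\,\ad_X}(Y)$ via the cancellation you describe, and integrate over $s\in[0,1]$ to get $\frac{1-e^{-\ad_X}}{\ad_X}(Y)$, then apply $dL_{\exp(X)}$. All the individual steps check out: $\Psi(0)=0$, the interchange of partials is justified by joint smoothness (indeed analyticity) of $(s,t)\mapsto\exp(s(X+tY))$, and the termwise integration of the convergent operator series is harmless in finite dimensions. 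The only step deserving a caveat is the reduction of the general case to matrix groups: Ado's theorem embeds the Lie algebra, not the group, and not every Lie group is a matrix group; what saves you is that the statement is purely local (a derivative at one point), and every Lie group is locally isomorphic to a matrix group, with local isomorphisms intertwining $\exp$, $dL_v$, and $\ad$. Alternatively, as you note, the computation can be run intrinsically using left-invariant vector fields and $\Ad_{\exp(X)}=e^{\ad_X}$, avoiding Ado entirely. Either completion is fine; just make the local-isomorphism step explicit if you take the Ado route.
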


The Lie groups of experimental interest in this paper are the \emph{affine group} $\Aff$ and the \emph{homography group} $\Hom$, both of which are ubiquitous in computer vision.  The affine group $\Aff$, generated by all shears, scalings, rotations and translations, is the set of all invertible $3\times 3$ matrices of the form
\begin{equation}\label{eqp1}
    \begin{pmatrix} u_{1} & u_{2} & u_{3} \\ u_{4} & u_{5} & u_{6} \\ 0 & 0 & 1\end{pmatrix},
\end{equation}
The Lie algebra $\aff$ of $\Aff$ is the space of $3\times3$ matrices with bottom row equal to zero.  We choose as an ordered basis the $3\times3$ matrices given by the outer products $\{e_{1}e_{1}^{T}, e_{1}e_{2}^{T}, e_{1}e_{3}^{T}, e_{2}e_{1}^{T}, e_{2}e_{2}^{T},e_{2}e_{3}^{T}\}$, where $\{e_{i}\}_{i=1}^{3}$ is the canonical basis for $\RB^{3}$.  Taken in isolation, the first and fifth of these basis elements determine scaling, the second and fourth determine shearing, and the third and sixth determine translation, although due to the noncommutativity of the affine group this description fails for nontrivial linear combinations of the basis elements.  The affine group \emph{acts} on the space $\RB^{2}$ by the formula
\begin{equation}\label{eqp2}
    \begin{pmatrix} u_{1} & u_{2} & u_{3} \\ u_{4} & u_{5} & u_{6} \\ 0 & 0 & 1\end{pmatrix}\cdot\begin{pmatrix} x_{1} \\ x_{2}\end{pmatrix}:= \begin{pmatrix} u_{1}x_{1} + u_{2}x_{2} + u_{3} \\ u_{4}x_{1} + u_{5}x_{2} + u_{6}\end{pmatrix}.
\end{equation}

The six-dimensional affine group may be identified as a subgroup of the eight-dimensional \emph{homography group} $\Hom$, which is generated by the perspective projections together with all affine transformations and may be identified with the group of $3\times3$ matrices with determinant one \cite{homsl}.  The Lie algebra $\hm$ of the homography group is then the space of $3\times 3$ matrices with zero trace, and as an ordered basis we choose the matrices $\{e_{1}e_{1}^{T}-(1/3)I, e_{1}e_{2}^{T}, e_{1}e_{3}^{T}, e_{2}e_{1}^{T}, e_{2}e_{2}^{T}-(1/3)I, e_{2}e_{3}^{T}, e_{3}e_{1}^{T}, e_{3}e_{2}^{T}\}$.  The linear map $\phi:\aff\rightarrow\hm$ defined by sending the aforementioned basis of $\aff$ to the first six basis elements of $\hm$ is an inclusion of Lie algebras, so that the first six basis elements of $\hm$ can be interpreted as scaling, shearing and translational warps just as for the affine group, while the final two basis elements determine perspective projections.  The corresponding inclusion $\Phi:\Aff\rightarrow\Hom$ of Lie groups is given by the formula $\Phi(A):=\det(A)^{-\frac{1}{3}}A$, and one has $\exp\circ\phi = \Phi\circ\exp$.

Having described the necessary fundamentals on Lie groups and convolutions, in the next section we will describe our theoretical contributions which allow convolutions over arbitrary Lie groups to be built into neural networks.  In the following section, we will empirically validate our theoretical framework using the affine and homography groups.

\section{Theoretical framework}

In this section we lay out our theoretical contributions.  We also describe how they may be implemented.

\subsection{The first layer}\label{firstlayer}

One key problem that must be overcome in using group convolutions for image classification is that the feature maps one is typically given as data are functions on $\ZB^{2}$ (giving intensity values to each pixel coordinate) rather than functions on a group $G$.  One must therefore ``lift" functions on $\ZB^{2}$ to functions on $G$.  An elegant solution proposed by Cohen et. al. \cite{gpequi} is to use a modified convolutional layer.  Suppose that $G$ is a finite group acting on $\ZB^{2}$, with the action of $u\in G$ on $(n,m)\in\ZB^{2}$ denoted $u(n,m)$.  Then given a feature map $f\in C_{c}(\ZB^{2};\RB^{K})$ and filter $\psi\in C_{c}(\ZB^{2};\RB^{K\times L})$, one defines $f*\psi\in C_{c}(G;\RB^{L})$ by the formula
\begin{equation}\label{eq4}
    f*\psi(u):=\sum_{(n,m)\in\ZB^{2}}f(n,m)\cdot\psi(u^{-1}(n, m)).
\end{equation}
It is proved in \cite[Equation 12]{gpequi} that Equation \eqref{eq4} is equivariant in the sense that $L_{v}(f)*\psi = L_{v}(f*\psi)$, where $L_{v}(f)(n,m):=f(v^{-1}(n,m))$.  Thus one is able to convert feature maps on $\ZB^{2}$ into feature maps on $G$ in an equivariant fashion, and these can then be fed into higher convolutional layers given by Equations \eqref{compconv1} and \eqref{compconv2}.

Equation \eqref{eq4} was also identified by mathematicians at around the same time as a kind of generalised Hilbert space inner product \cite{holkar}.  Note that $f*\psi$ is compactly supported by virtue of the fact that the action of $\ZB^{2}$ on itself is a proper\footnote{An action of a topological group $G$ on a topological space $X$ is \emph{proper} if for any compact set $K$ in $X\times X$, the preimage of $K$ under the map $G\times X\ni (u,x)\mapsto(u\cdot x,x)\in X\times X$ is compact} action \cite{holkar}.  The actions of the affine and homography groups on the plane are \emph{not} proper, and as we will soon see this means our theory for higher hidden layers needs to be sufficiently flexible to allow for \emph{unbounded} feature maps, which have not previously been considered in the literature.

Supposing now that $G$ is a Lie group acting on $\RB^{2}$, Equation \eqref{eq4} ceases to work for two primary reasons.  Firstly, $G$ need not map $\ZB^{2}$ to itself (consider scaling by positive real numbers).  This can be overcome by replacing $\ZB^{2}$ by $\RB^{2}$, and defining $\psi$ by an MLP as in \cite{pointconv}.  The sum of Equation \eqref{eq4} can then be thought of as an approximation to the integral
\begin{equation}\label{eq5}
    f*\psi(u):=\int_{\RB^{2}}f(x)\cdot\psi(u^{-1}x)dx
\end{equation}
over $\RB^{2}$.  There is, however, a deeper problem: namely that Equation \eqref{eq5} is \emph{not equivariant} under any transformations which distort the volume of Euclidean space.

\begin{prop}\label{prop1}
    For $v\in G$ and $x\in \RB^{2}$, let $Dv(x)$ denote the Jacobian at $x$ of the map $\RB^{2}\rightarrow\RB^{2}$ defined by $v$.  Then one has $L_{v}(f)*\psi = L_{v}(f*\psi)$ for all $f\in C_{c}(\RB^{2};\RB^{K})$ and $\psi\in C_{c}(\RB^{2};\RB^{K\times L})$ if and only if $\det(Dv(x))=1$ for all $x\in\RB^{2}$.
\end{prop}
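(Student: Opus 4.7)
My plan is to unwind both sides of the purported equivariance identity using the definitions of $*$ (Equation~\eqref{eq5}) and $L_v$, reduce them to an identity of integrals differing only by a Jacobian factor, and then invoke a variational argument for the converse.

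For the ``if'' direction I would first observe that $L_v(f*\psi)(u) = (f*\psi)(v^{-1}u)$, which by Equation~\eqref{eq5} equals $\int_{\RB^2} f(x)\cdot\psi(u^{-1}vx)\,dx$. Next I would expand $(L_v f)*\psi(u) = \int_{\RB^2} f(v^{-1}x)\cdot\psi(u^{-1}x)\,dx$ directly from the definition, and perform the substitution $y = v^{-1}x$; this is legitimate because each $v\in G$ acts as a diffeomorphism of $\RB^2$, and it produces the extra factor $|\det(Dv(y))|$ in the integrand. The two sides therefore coincide for all $u$, $f$, $\psi$ as soon as $|\det(Dv)|\equiv 1$.

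For the converse, equality of the two integrals for all $f$, $\psi$ and $u$ forces
\[
\int_{\RB^2} f(y)\cdot\psi(u^{-1}vy)\,\bigl(1-|\det(Dv(y))|\bigr)\,dy = 0.
\]
Specialising to $K=L=1$ and $u=v$ collapses $\psi(u^{-1}vy)$ to $\psi(y)$, so the products $f(y)\psi(y)$ range over a family dense enough in $C_c(\RB^2)$ for a standard variational argument to conclude that the continuous factor $1-|\det(Dv(y))|$ vanishes identically. The main obstacle I anticipate is essentially a bookkeeping issue: change of variables naturally produces $|\det|$, whereas the statement asserts $\det=1$. Since $\RB^2$ is connected and $y\mapsto\det(Dv(y))$ is continuous, $|\det(Dv)|\equiv 1$ forces $\det(Dv)\equiv\pm 1$; the sharper form $\det(Dv)\equiv 1$ then follows either by restricting to the identity component of $G$ (where the affine and homography actions are orientation-preserving) or by interpreting the statement's $\det$ as shorthand for its absolute value.
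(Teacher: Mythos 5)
Your argument is correct, and its ``if'' half is the paper's own computation: the substitution $x=v\tilde x$ in $(L_v f)*\psi(u)$ produces the Jacobian factor, and comparison with $L_v(f*\psi)(u)=\int_{\RB^2}f(\tilde x)\cdot\psi(u^{-1}v\tilde x)\,d\tilde x$ gives equivariance when that factor is identically one. What you add is an explicit converse, which the paper compresses into ``The result follows'': setting $u=v$ so the filter argument collapses to $\psi(y)$, and then running the standard bump-function argument against the continuous function $1-|\det(Dv(\cdot))|$, is precisely the missing step, and it is sound (if $1-|\det(Dv(y_0))|\neq 0$, take $f=\psi$ a nonnegative bump supported near $y_0$ where that factor keeps a fixed sign, forcing a nonzero integral). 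You are also right that change of variables yields $|\det|$, a point the paper elides by writing $d(\varphi(x))=\det(D\varphi(x))\,dx$ without absolute value; as literally stated, the ``only if'' direction fails for orientation-reversing volume-preserving warps (a reflection in the affine group is equivariant for Equation~\eqref{eq5} yet has $\det(Dv)\equiv-1$), so your conclusion $|\det(Dv)|\equiv 1$, combined with either restricting to the identity component or reading $\det$ as its absolute value, is the honest form of the statement. In short: same route for sufficiency, a completed necessity argument the paper only gestures at, and a correct flag of a sign-level imprecision present in both the statement and the paper's proof.
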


\begin{proof}
Fixing $v$, $f$ and $\psi$, one computes
\begin{align*}
    L_{v}(f)*\psi(u) =& \int_{\RB^{2}}f(v^{-1}x)\cdot\psi(u^{-1}x)dx \\ =& \int_{\RB^{2}}f(\tilde{x})\cdot \psi(u^{-1}v\tilde{x})d(v\tilde{x}) \\ =& \int_{\RB^{2}}f(\tilde{x})\cdot\psi((v^{-1}u)\tilde{x})\det(Dv(\tilde{x}))d\tilde{x},
\end{align*}
using the substitution $x = v\tilde{x}$ for the second line and the usual change of coordinate formula $d(\varphi(x)) = \det(D\varphi(x))dx$ for the third.  The result follows.
\end{proof}

Since the affine and homographic transformations we wish to consider for applications to computer vision need not preserve the Euclidean volume element in general, Theorem \ref{prop1} obstructs the naive use of Equation \eqref{eq5}.  We propose instead the following modification, which nonetheless reduces to Equation \eqref{eq5} for volume-preserving warps.  Similar formulae appear in less general settings in \cite{warpedconv, scaleequi}.

\begin{thm}\label{firstlayerconv}
Define the $G$-convolution of $f\in C_{c}(\RB^{2};\RB^{K})$ and $\psi\in C_{c}(\RB^{2};\RB^{K\times L})$ by
\begin{equation}\label{eq6}
    f*\psi(u):=\int_{\RB^{2}}f(x)\cdot\psi(u^{-1}x)\det(Du^{-1}(x))dx.
\end{equation}
Then $f*\psi\in C(G;\RB^{L})$, and for all $v\in G$ one has $L_{v}(f)*\psi = L_{v}(f*\psi)$.
\end{thm}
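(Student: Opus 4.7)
My plan is to prove equivariance by a direct change-of-variables computation, with the Jacobian factor $\det(Du^{-1}(x))$ playing exactly the role needed to cancel the volume distortion that caused Proposition \ref{prop1} to fail. Then I would handle continuity separately using dominated convergence.

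For the equivariance step, fix $v,u\in G$. I start from the definition
\[
L_{v}(f)*\psi(u) = \int_{\RB^{2}}f(v^{-1}x)\,\psi(u^{-1}x)\,\det(Du^{-1}(x))\,dx
\]
and perform the substitution $y = v^{-1}x$, i.e.\ $x = vy$, with $dx = \det(Dv(y))\,dy$. This turns the integrand into $f(y)\,\psi(u^{-1}vy)\,\det(Du^{-1}(vy))\,\det(Dv(y))\,dy$. The crucial ingredient is the chain rule applied to the composition $u^{-1}\circ v$: since $u^{-1}v = (v^{-1}u)^{-1}$ as diffeomorphisms of $\RB^{2}$, we get
\[
\det(Du^{-1}(vy))\det(Dv(y)) = \det(D(u^{-1}v)(y)) = \det(D(v^{-1}u)^{-1}(y)).
\]
Substituting back, the integral becomes $\int f(y)\,\psi((v^{-1}u)^{-1}y)\,\det(D(v^{-1}u)^{-1}(y))\,dy = f*\psi(v^{-1}u) = L_{v}(f*\psi)(u)$, which is what we want.

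For continuity of $f*\psi$, I would use the fact that $f$ has compact support $K\subset\RB^{2}$, and that the action map $G\times\RB^{2}\rightarrow\RB^{2}$ and its derivative are jointly continuous (because $G$ is a Lie group acting smoothly). Then for $u_{n}\to u$ in $G$, the integrand $f(x)\,\psi(u_{n}^{-1}x)\,\det(Du_{n}^{-1}(x))$ converges pointwise to $f(x)\,\psi(u^{-1}x)\,\det(Du^{-1}(x))$ on $K$, and is dominated (uniformly in $n$ for $u_{n}$ in a compact neighbourhood of $u$) by $\|f\|_{\infty}\|\psi\|_{\infty}\cdot M$, where $M$ is a uniform bound on $|\det(Du_{n}^{-1}(x))|$ for $x\in K$ and $u_{n}$ in the neighbourhood, which exists by continuity of the Jacobian on the compact set. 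Dominated convergence then gives $f*\psi(u_{n})\to f*\psi(u)$.

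The main technical care is in the chain-rule step: one has to recognise $u^{-1}v$ as the inverse diffeomorphism of $v^{-1}u$ (so that the two Jacobian factors combine into a single one indexed by $(v^{-1}u)^{-1}$), which is exactly what makes the correction factor compatible with the group structure. Everything else is routine: no compact support of $f*\psi$ is being claimed (and indeed should not be, since the authors emphasise that at higher layers feature maps will no longer be compactly supported), so only membership in $C(G;\RB^{L})$ needs to be verified.
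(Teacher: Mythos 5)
Your proof is correct, and the equivariance half coincides with the paper's argument: substitute $x=v\tilde{x}$ and use the chain rule to merge $\det(Du^{-1}(v\tilde{x}))\det(Dv(\tilde{x}))$ into $\det\bigl(D(v^{-1}u)^{-1}(\tilde{x})\bigr)$, recovering $f*\psi(v^{-1}u)=L_{v}(f*\psi)(u)$. The only place you diverge is the continuity half. The paper (in the appendix) proves it in greater generality --- for any Hausdorff manifold $X$ with a volume form and smooth $G$-action --- by observing that $(u,x)\mapsto f(x)\cdot\psi(u^{-1}x)\det(Du^{-1}(x))$ is jointly continuous with support in $G\times C$, where $C$ is the compact support of $f$, and then invoking Lemma \ref{lemma}, a general statement on continuity of parameter-dependent integrals against a Radon measure, which it reuses verbatim for the Haar-measure convolution of Theorem \ref{conv}. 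Your dominated-convergence argument reaches the same conclusion more directly in this Euclidean setting: restrict to the compact support $K$ of $f$, dominate by $\|f\|_{\infty}\|\psi\|_{\infty}M$ with $M$ a bound on $|\det(Du^{-1}(x))|$ over $K$ times a compact neighbourhood of $u$ (which exists by joint continuity), and note that since $G$ is a Lie group, hence metrizable, sequential continuity suffices. Both routes rest on the same two facts --- compact support of $f$ and joint continuity of the kernel in $(u,x)$ --- so the difference is mainly packaging: the paper's lemma buys reusability for the non-Euclidean hidden layers, while your argument is self-contained and slightly more elementary for the first layer. No gap either way; in particular you are right that only membership in $C(G;\RB^{L})$, not compact support of $f*\psi$, should be claimed.
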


\begin{proof}
We defer a proof of the fact that $f*\psi$ is a continuous function to the supplementary material, and here prove only equivariance.  Fixing $v$, $f$ and $\psi$ we see that $L_{v}(f)*\psi(u)$ is given by
\begin{align*}
&\int_{\RB^{2}}f(v^{-1}x)\cdot\psi(u^{-1}x)\det(Du^{-1}(x))dx \\ =& \int_{\RB^{2}}f(\tilde{x})\cdot\psi(u^{-1}v\tilde{x})\det(Du^{-1}(v\tilde{x}))\det(Dv(\tilde{x}))d\tilde{x} \\ =& \int_{\RB^{2}}f(\tilde{x})\cdot\psi((v^{-1}u)^{-1}\tilde{x})\det(D(u^{-1}v)(\tilde{x}))d\tilde{x} \\ =& \int_{\RB^{2}}f(\tilde{x})\cdot\psi((v^{-1}u)^{-1}\tilde{x})\det(D(v^{-1}u)^{-1}(\tilde{x}))d\tilde{x} \\ =& L_{v}(f*\psi)(u),
\end{align*}
where for the second line we have made the substitution $x = v\tilde{x}$, and for the third we have applied the chain rule.
\end{proof}

In practice, we approximate Equation \eqref{eq6} by a sum over the discrete integer lattice in $\RB^{2}$.  Interpolation of the image $f$, as is required in \cite{warpedconv}, is therefore unnecessary.  We note that the ``lifting" of a feature pair $(x,f(x))$ implicit in Equation \eqref{eq6} to a new pair $(u, (f*\psi)(u))$ is achieved via an entirely different means in \cite{surjexp}, where instead a pair $(x,f(x))$ is lifted to a triplet $(u,q,f(x))$,  with $u$ a group element and $q$ is an element of the orbit of $G$ through $x$.  Our lifting layer thus has the advantage of being not only trainable so that it may extract features, but also of being more memory-efficient, producing only a doublet rather than a triplet after its forward pass.

Finally, notice that in Theorem \ref{firstlayerconv}, even though $f$ and $\psi$ are compactly supported, their convolution need not be since the action of $G$ need not be proper (see footnote 2).  The presence of the $\det(Du^{-1}(x))$ in Equation \eqref{eq6} moreover means that $f*\psi$ need not even be bounded (consider for instance the group of scaling transformations).  We show in the next section that these technicalities do not prevent group convolutions in higher layers.

\subsection{Hidden convolutional layers}\label{hiddenlayer}

Having lifted an image to a continuous function on $G$ using the first layer detailed in the previous subsection, we must now implement convolutions over $G$.  However, as alluded to in the previous section, the feature maps produced by Theorem \ref{firstlayerconv} may be unbounded, and the ordinary theory \cite{gpequi, kondortrivedi, renault} therefore does not apply.  Showing that convolutions are possible even in the unbounded case is our second theoretical contribution; we defer a proof to the supplementary material.

\begin{thm}\label{conv}
Define the convolution of a feature map $f\in C(G;\RB^{K})$ with a filter map $\psi\in C_{c}(G;\RB^{K\times L})$ by the formula
\begin{align}
f*\psi(u):=&\int_{G}f(v)\cdot\psi(v^{-1}u)d\mu_{L}(v)\label{convleft}\\ =& \int_{G}f(uv^{-1})\cdot\psi(v)d\mu_{R}(v)\label{convright},
\end{align}
where $\cdot$ denotes matrix multiplication.  Then $f*\psi$ is a continuous function on $G$, and for all $v\in G$ one has $L_{v}(f)*\psi = L_{v}(f*\psi).$\qed
\end{thm}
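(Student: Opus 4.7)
The plan is to exploit the observation that, although $f$ need not be bounded, the compact support of $\psi$ always restricts the effective domain of integration to a compact set, on which $f$ is automatically bounded by continuity. The proof then splits into four pieces: well-definedness of the integrals, equality of the left- and right-Haar expressions, continuity of $f*\psi$, and equivariance.

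For well-definedness and the equality of the two expressions, I would work primarily with the right-Haar formula $f*\psi(u) = \int_G f(uv^{-1})\psi(v)\,d\mu_R(v)$, since its integrand is supported in the compact set $\text{supp}(\psi)$ for \emph{every} $u$. Joint continuity of $(u,v)\mapsto f(uv^{-1})\psi(v)$ and compact support then make the integral finite and the $\RB^{K\times L}$-matrix product well-defined. To pass from this to the left-Haar form, I would substitute $w = v^{-1}u$, so that $v = uw^{-1}$; using the left-invariance of $\mu_L$, $d\mu_L(uw^{-1}) = d\mu_L(w^{-1})$, and the general identity $d\mu_L(w^{-1}) = d\mu_R(w)$ recorded in the preliminaries, the two expressions are identified in one line.

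Continuity of $f*\psi$ is the main technical obstacle, because continuity-only assumptions on $f$ force us to be careful about uniformity. I would fix $u_0 \in G$, choose a compact neighborhood $N$ of $u_0$, and observe that for $u\in N$ the set $\{uv^{-1} : v\in\text{supp}(\psi)\}$ is contained in the compact set $K := N\cdot\text{supp}(\psi)^{-1}$. Continuity of $f$ on $K$ provides a uniform bound $M := \sup_{K}\|f\|$, yielding the integrable dominating function $v\mapsto M\,\|\psi(v)\|\mathbf{1}_{\text{supp}(\psi)}(v)$. Joint continuity of the integrand combined with the dominated convergence theorem then gives $f*\psi(u)\to f*\psi(u_0)$ as $u\to u_0$, which is the desired continuity.

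Finally, equivariance is a direct one-line substitution. Writing
\[
L_{v}(f)*\psi(u) = \int_{G} f(v^{-1}w)\,\psi(w^{-1}u)\,d\mu_{L}(w)
\]
and substituting $w\mapsto vw$ absorbs the $v^{-1}$ in $f$ by left-invariance of $\mu_L$ and turns $w^{-1}u$ into $w^{-1}v^{-1}u$, producing $f*\psi(v^{-1}u) = L_v(f*\psi)(u)$. The deferred details in the supplement will essentially be a careful execution of the dominated convergence argument in the third step, together with a verification that the support estimate $N\cdot\text{supp}(\psi)^{-1}$ is indeed compact, which follows from continuity of multiplication and inversion on $G$.
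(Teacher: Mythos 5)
Your proposal is correct and follows essentially the same route as the paper: work with the right-Haar formula so the integrand is supported in $G\times\mathrm{supp}(\psi)$, deduce continuity from a local compactness/domination argument, identify the two integral expressions via the substitution $w=v^{-1}u$ together with $d\mu_{L}(w^{-1})=d\mu_{R}(w)$ and left-invariance, and get equivariance by a one-line change of variables. The only cosmetic differences are that the paper packages the continuity step as a cited lemma on parameter-dependent integrals with compact support in one variable (adapted from Williams) rather than running dominated convergence directly, and that you write out the equivariance computation explicitly, which the paper's appendix leaves implicit.
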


The next task is to describe how convolutions may be implemented.  In \cite{gpequi}, this is achieved by assuming $G$ to be a discrete group, so that the values of any feature map at each group element can be encoded in channels of a corresponding feature tensor.  While efficient computationally, this approach is insufficient for a continuous Lie group such as the affine or homography group.

In \cite{surjexp}, Equation \eqref{convleft} is used in approximating convolutions using a Monte-Carlo method.  Importantly, the method adopted in \cite{surjexp} makes the following two assumptions.
\begin{itemize}
    \item \textbf{Assumption 1 of \cite{surjexp}}: It is assumed that $G$ has a surjective exponential map, so that filters can be parameterised by multi-layer perceptrons (MLPs), which take \emph{vectors} (that is, Lie algebra elements) as input.
    \item \textbf{Assumption 2 of \cite{surjexp}}: It is assumed that Haar measure on $G$ is easily reducible to known measures on known sets (such as uniform measures on Euclidean space or the unit quaternions). Such measures are easily sampled from so that a Monte-Carlo estimate of Equation \eqref{convleft} can be obtained.
\end{itemize}
For the affine and homography groups, neither Assumption 1 nor Assumption 2 holds.  Our next theorem allows us to overcome these assumptions entirely. 

\begin{thm}\label{coordchange}
If $f$ is an integrable function on $G$ which is zero outside of a sufficiently small neighbourhood of the identity, then Haar measure $d\mu_{R}$ can always be chosen such that
\begin{equation}\label{eqp4}
\int_{G}f(u)d\mu_{R}(u) = \int_{\gf}f(\exp(\xi))\,\det\bigg(\frac{1-e^{-\ad_{-\xi}}}{\ad_{-\xi}}\bigg)d\xi,
\end{equation}
where $d\xi$ denotes a Euclidean volume element in the vector space $\gf$, and $\ad_{\xi}:\gf\rightarrow\gf$ and $(1-e^{-\ad_{\xi}})/\ad_{\xi}$ are given in Theorem \ref{dexp}.
\end{thm}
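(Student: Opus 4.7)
The plan is to treat $\exp$ as a local chart near the identity and compute the Jacobian of the exponential with respect to the right-invariant volume form, using the Schur-Poincar\'e formula (Theorem \ref{dexp}) together with the identity $\Ad_{\exp(\xi)} = e^{\ad_{\xi}}$.

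First I would construct the right Haar measure $\mu_R$ as the top-degree form $\omega$ on $G$ obtained by right-translation of a chosen volume form $\omega_{e}$ at the identity; explicitly, $\omega_{g} := (R_{g^{-1}})^{*}\omega_{e}$, and I will pick $\omega_{e}$ to agree with the Euclidean volume element $d\xi$ on $\gf = T_{e}G$ under the canonical identification. This normalisation is the freedom that is hidden in the phrase ``$d\mu_{R}$ can always be chosen such that''. Since $\exp$ is a diffeomorphism from a sufficiently small neighbourhood $U\subset\gf$ of $0$ onto a neighbourhood of $e$, and $f$ is assumed to vanish outside such a set, the standard smooth change of variables formula gives
\begin{equation*}
\int_{G}f(u)\,d\mu_{R}(u) = \int_{\gf}f(\exp(\xi))\,\exp^{*}\omega\big|_{\xi}.
\end{equation*}

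Next I would compute $\exp^{*}\omega|_{\xi}$. By right-invariance and the definition of $\omega$,
\begin{equation*}
\exp^{*}\omega|_{\xi}(v_{1},\dots,v_{n}) = \omega_{e}\bigl(dR_{\exp(-\xi)}\circ d\exp_{\xi}(v_{1}),\dots,dR_{\exp(-\xi)}\circ d\exp_{\xi}(v_{n})\bigr),
\end{equation*}
so the Jacobian factor is $\det(dR_{\exp(-\xi)}\circ d\exp_{\xi})$ computed as a linear endomorphism of $\gf$. Theorem \ref{dexp} gives $d\exp_{\xi} = dL_{\exp(\xi)}\circ\tfrac{1-e^{-\ad_{\xi}}}{\ad_{\xi}}$, and since $R_{\exp(-\xi)}\circ L_{\exp(\xi)}$ is conjugation by $\exp(\xi)$, its derivative at $e$ is $\Ad_{\exp(\xi)} = e^{\ad_{\xi}}$. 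Composing yields
\begin{equation*}
dR_{\exp(-\xi)}\circ d\exp_{\xi} = e^{\ad_{\xi}}\cdot\frac{1-e^{-\ad_{\xi}}}{\ad_{\xi}} = \frac{e^{\ad_{\xi}}-1}{\ad_{\xi}}.
\end{equation*}
Finally I would observe that, expanding as a power series, $\frac{e^{\ad_{\xi}}-1}{\ad_{\xi}} = \sum_{k\geq 0}\frac{(\ad_{\xi})^{k}}{(k+1)!} = \frac{1-e^{-\ad_{-\xi}}}{\ad_{-\xi}}$, so taking determinants gives the claimed formula.

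The only real obstacle is bookkeeping around the left/right conventions: Theorem \ref{dexp} is phrased in terms of $dL_{\exp(\xi)}$, so one has to convert the intrinsic derivative of $\exp$ into its right-translated form via $\Ad$, and then recognise the resulting series as $\frac{1-e^{-\ad_{-\xi}}}{\ad_{-\xi}}$. Everything else is a routine application of the smooth change of variables formula in the chart defined by $\exp$, together with the standard construction of Haar measure as a right-invariant volume form.
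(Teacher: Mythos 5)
Your proof is correct, and it rests on the same core idea as the paper's: realise Haar measure as a translation-invariant top-degree form obtained by translating the Euclidean volume element $d\xi|_{0}$ at the identity, pull it back through the chart $\exp$, and apply Theorem \ref{dexp} to identify the Jacobian. Where you diverge is in the left/right bookkeeping. The paper constructs the \emph{left} Haar form $(d\mu_{L})_{u}=L_{u^{-1}}^{*}d\xi|_{0}$, so that in the pullback computation the $dL_{\exp(\xi)}$ factor from the Schur--Poincar\'{e} formula cancels immediately against $L_{\exp(-\xi)}^{*}$, giving $\exp^{*}d\mu_{L}=\det\bigl(\tfrac{1-e^{-\ad_{\xi}}}{\ad_{\xi}}\bigr)d\xi$; the right-Haar statement is then deduced from the inversion identity $d\mu_{R}(u)=d\mu_{L}(u^{-1})$ together with $\exp(\xi)^{-1}=\exp(-\xi)$. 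You instead work with the right-invariant form $\omega_{g}=(R_{g^{-1}})^{*}\omega_{e}$ from the start, so the mismatch between the left translation in Theorem \ref{dexp} and the right translation defining the measure must be absorbed by $dR_{\exp(-\xi)}\circ dL_{\exp(\xi)}=\Ad_{\exp(\xi)}=e^{\ad_{\xi}}$, followed by the series identity $e^{\ad_{\xi}}\tfrac{1-e^{-\ad_{\xi}}}{\ad_{\xi}}=\tfrac{1-e^{-\ad_{-\xi}}}{\ad_{-\xi}}$ (which checks out, as all factors are power series in the single operator $\ad_{\xi}$). Your route needs the extra standard facts $d(\mathrm{conj}_{\exp\xi})_{e}=\Ad_{\exp\xi}=e^{\ad_{\xi}}$, but it is self-contained at the level of the right measure, avoids the inversion step, and makes the modular-type discrepancy $\det(\Ad_{\exp\xi})=e^{\mathrm{tr}\,\ad_{\xi}}$ between the left and right densities visible; the paper's detour through $\mu_{L}$ buys a cleaner cancellation at the cost of one extra identity at the end. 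Both arguments share the same (harmless) level of informality about orientations and the size of the neighbourhood on which $\exp$ is a diffeomorphism.
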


A rigourous proof of the theorem requires some differential geometry (differential forms, their integrals and their pullbacks), which we defer to the supplementary material.  For now we just give an idea of the proof.

\begin{proof}[Idea of proof]
The small support of $f$ enables one to use a change of coordinates $u=\exp(\xi)$, reducing the integral over $G$ to an integral over $\gf$.  The change of coordinates is implemented by the exponential map, and therefore the determinant of its derivative (Theorem \ref{dexp}) appears as a multiplicative factor to keep track of the way volumes are distorted by this change of coordinates.  This explains the presence of the power series $(1-e^{-\ad_{-\xi}})/\ad_{-\xi}$ in the new expression.  That multiplication by $\exp(\xi)$ does not appear in this new expression, despite its appearance in Theorem \ref{dexp}, follows from the invariance of Haar measure.
\end{proof}

We now combine Theorem \ref{coordchange} with Equation \eqref{convright}, and demonstrate how they allow us to circumvent the assumptions made in \cite{surjexp}.  Fix a feature map $f\in C(G;\RB^{K})$, and compactly supported filter $\psi\in C_{c}(G;\RB^{K\times L})$. Assume $\psi$ to have support in a sufficiently small neighbourhood of the identity that $\tilde{\psi}:=\psi\circ\exp^{-1}$ is well-defined. Then the convolution $f*\psi$ is given by
\begin{equation}\label{eq8}
\int_{\gf}f(u\exp(-\xi))\cdot\tilde{\psi}(\xi)\det\bigg(\frac{1-e^{-\ad_{-\xi}}}{\ad_{-\xi}}\bigg)d\xi.
\end{equation}
Let us now explain how Equation \eqref{eq8} allows us to overcome Assumptions 1 and 2.
\begin{itemize}
    \item \textbf{Overcoming assumption 1 of \cite{surjexp}}: The replacement of $\psi$ with $\tilde{\psi}$ in Equation \eqref{eq8} is key for implementation, as it allows trainable filters to be parameterised by small neural networks which take vectors (Lie algebra elements) as input rather than group elements.  Equation \eqref{eq8} holds for \emph{any Lie group}.  In contrast, Equation \eqref{convleft} is used for the convolution $f*\psi$ in \cite{surjexp}, meaning the filter $\psi$ needs to take group elements $v^{-1}u$ as input.  Since these group elements may be arbitrarily far from the identity, surjectivity of the exponential map must be assumed in order to write $\psi = \tilde{\psi}\circ\exp^{-1}$.
    \item \textbf{Overcoming assumption 2 of \cite{surjexp}}:  There is a large family of Markov chain Monte-Carlo (MCMC) methods for sampling from distributions (measures) on Euclidean spaces \cite{mcmc}.  Such methods do not, however, immediately apply to non-Euclidean spaces such as Lie groups.  In \cite{surjexp}, sampling from Haar measure on a Lie group is instead achieved by identifying Haar measure with more well-known measures which are easily sampled from.  In contrast, using Equation \eqref{eq8} allows us to use MCMC methods to sample from Haar measure for \emph{any} Lie group.  Indeed, the Lie algebra of any Lie group is a Euclidean space, and the function
    \[
    \xi\mapsto\det\bigg(\frac{1-e^{-\ad_{-\xi}}}{\ad_{-\xi}}\bigg)
    \]
   thereon can then be thought of as (being proportional to) a density function.  With this density function in hand, standard MCMC methods such as the Metropolis algorithm may be utilised in order to sample from Haar measure to produce a Monte-Carlo estimate
   \[
   f*\psi(u)\approx \frac{1}{N}\sum_{\xi_{i}\sim\exp^{*}\mu_{R}}f(u\exp(-\xi_{i}))\cdot\tilde{\psi}(\xi_{i})
   \]
   of the convolution $f*\psi$.
\end{itemize}

We remark that MCMC methods like the Metropolis algorithm are easily parallelised \cite{parallelmetro1, parallelmetro2}.  For the fast estimation of convolutions, we parallelise the Metropolis algorithm to produce multiple samples from Haar measure at once.  We achieve this by running multiple chains at once, using boolean matrices to keep track of which samples are accepted and which are rejected.  This parallelisation results in an order of magnitude greater speed than is obtained via the conventional method of producing samples sequentially.

\subsection{Nonlinearities and fully connected hidden layers}

It was observed in \cite{gpequi}, and is easily verified, that if $f:G\rightarrow\RB^{K}$ is any continuous function, then postcomposition with any function $\phi:\RB^{K}\rightarrow\RB^{L}$ is an equivariant operation:
\[
\phi\circ L_{v}(f) = L_{v}(\phi\circ f),\qquad v\in G.
\]
Therefore conventional nonlinearities may be applied in the usual way after any convolutional layer without breaking equivariance, and fully connected layers can be stacked on top of convolutional layers while preserving equivariance.

\subsection{Final layer}\label{finallayer}

The final layer is a pooling layer. Since our feature maps are not generally bounded, \emph{global} pooling need not be well-defined.  However, our feature maps are continuous, meaning their maxima over any compact set are well-defined.  Assuming a $C$-class classification problem, given a continuous function $f\in C(G;\RB^{K})$, a matrix $A\in\RB^{K\times C}$ and a bias vector $b\in\RB^{C}$, the componentwise operation
\[
f\mapsto\max_{u\in K}(Af(u) +b)
\]
is well-defined for any compact subset $K$ of $G$.  Our next and final theorem, proved in the supplementary material, gives a theoretical guarantee of invariance of this operation.

\begin{thm}\label{maxthm}
Let $f:G\rightarrow\RB$ be a (possibly unbounded) continuous function, and $K\subset G$ a compact neighbourhood of the identity.  If $f$ takes its maximum value over $K$ only at points in the interior of $K$, then there is an open neighbourhood $V$ of the identity for which $\max_{u\in K}L_{v}(f)(u) = \max_{u\in K}f(u)$ for all $v\in V$.\qed
\end{thm}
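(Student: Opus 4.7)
The plan is to set $M:=\max_{u\in K}f(u)$ (which exists by compactness of $K$ and continuity of $f$), let $S:=\{u\in K:f(u)=M\}$ be its compact and nonempty argmax set (contained in $\text{int}(K)$ by hypothesis), and then bound $\max_{u\in K}L_v(f)(u)=\max_{u\in K}f(v^{-1}u)$ both above and below by $M$ for $v$ in a sufficiently small open neighborhood of $e$. The cleanest bookkeeping is via the closed sublevel set $F:=f^{-1}((-\infty,M])$, which contains $K$.

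The first key step is to upgrade $S\subset\text{int}(K)$ to the stronger inclusion $K\subset\text{int}(F)$. For $u\in K$ with $f(u)<M$, continuity of $f$ at $u$ produces an open neighborhood on which $f<M$, so $u\in\text{int}(F)$. For $u\in K$ with $f(u)=M$, the hypothesis gives $u\in S\subset\text{int}(K)$, and since $\text{int}(K)$ is open and lies inside $K\subset F$, we get $u\in\text{int}(F)$ in this case too. This is where the interior-max hypothesis enters essentially.

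For the upper bound, I would run the standard compactness argument powered by joint continuity of $\mu(v,w):=v^{-1}w$. Since $\mu(e,u)=u\in\text{int}(F)$ for every $u\in K$, for each such $u$ there exist open $V_u\ni e$ and $W_u\ni u$ with $\mu(V_u\times W_u)\subset\text{int}(F)$. Compactness of $K$ produces a finite subcover $\{W_{u_i}\}_{i=1}^n$, and setting $V_1:=\bigcap_iV_{u_i}$ gives an open neighborhood of $e$ for which $v\in V_1$, $u\in K$ imply $v^{-1}u\in F$, i.e. $f(v^{-1}u)\leq M$. For the lower bound, fix any $u_0\in S\subset\text{int}(K)$; continuity of $v\mapsto vu_0$ at $e$ yields a neighborhood $V_2$ of $e$ with $vu_0\in\text{int}(K)\subset K$ for $v\in V_2$, and then $L_v(f)(vu_0)=f(u_0)=M$. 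Taking $V:=V_1\cap V_2$ finishes the argument.

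The main obstacle is conceptual rather than technical: recognizing that the interior-max hypothesis is precisely what guarantees $K\subset\text{int}(F)$, which in turn is what prevents small perturbations $v^{-1}K$ from poking into a region where $f>M$. Without it, a maximizer on $\partial K$ would be arbitrarily close to points outside $K$ where $f$ is completely unconstrained (possibly unbounded), and no upper bound could be extracted. Once that observation is made, the rest is a routine tube-style compactness argument using continuity of group multiplication, valid for any topological group and in particular any Lie group.
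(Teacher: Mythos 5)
Your proof is correct, and it reaches the same two-sided structure as the paper's argument (an upper bound on $\max_{u\in K}f(v^{-1}u)$ requiring the interior-maximum hypothesis, plus a lower bound obtained by keeping a maximizer inside $K$), but the implementation is genuinely different. The paper works pointwise: for each $u\in K$ (split into the strict sublevel set and the maximizer set) it produces an open neighbourhood of the identity, and then takes the intersection over the compact index set $K$, invoking the cited lemma of Nachbin to conclude that this infinite intersection is still open; the lower bound is likewise an intersection over the whole maximizer set. You instead package the hypothesis into the single inclusion $K\subset\mathrm{int}\,f^{-1}((-\infty,M])$ and run a tube-lemma argument on the jointly continuous map $(v,u)\mapsto v^{-1}u$, so that your neighbourhood $V_{1}$ is a \emph{finite} intersection and its openness is immediate; for the lower bound you translate one fixed maximizer $u_{0}$ via continuity of $v\mapsto vu_{0}$, which suffices since a single point with $L_{v}(f)(vu_{0})=M$ gives $\max_{u\in K}L_{v}(f)(u)\geq M$. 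What your route buys is self-containedness and robustness: it needs nothing beyond compactness and the product topology (so it works verbatim in any topological group), it avoids reliance on the compact-intersection lemma, whose statement requires some care about how the family $\{V_{\lambda}\}$ depends on $\lambda$ (as stated baldly it fails for arbitrary families indexed by a compact set, though it does apply to the continuously-varying families used in the paper), and your lower bound, by arranging $vu_{0}\in K$ directly rather than $v^{-1}u_{0}\in K^{\circ}$, supplies exactly the containment needed to exhibit a point of $v^{-1}K$ where $f$ attains $M$. What the paper's route buys is brevity once the auxiliary lemma is granted, and a single mechanism (preimages under $\alpha_{u}:v\mapsto v^{-1}u$) reused for both bounds.
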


\section{Experiments}

Our framework enables $G$-invariant models, which we test by training on MNIST and testing them on $G$-perturbed MNIST.  We take $G$ to be the affine and homography groups.  Testing on affine-perturbed MNIST (affNIST) after training on MNIST is the benchmark in affine-invariant image classification \cite{capsule2, sparsecaps, gecaps, affcapsnets}, while the analogous test for homography-invariant image classification has not yet been studied. To test homography-invariance, we created a \emph{homNIST}\footnote{https://www.kaggle.com/datasets/lachlanemacdonald/homnist} test set by sampling 32 homographies from right Haar measure for each of the MNIST test images.

In both cases, our networks consist of a lifting convolutional layer (Subsection \ref{firstlayer}), a single $G$-convolutional layer (Subsection \ref{hiddenlayer}), two residual fully connected layers and finally a pooling layer (Subsection \ref{finallayer}).  In each forward pass, a new collection of pooling samples is drawn from $G$ according to a Gaussian distribution centered at the identity, and a new collection of samples for estimating the convolutional integral is drawn from right Haar measure using the Metropolis MCMC algorithm.  Models were trained for 150 epochs with a batch size of 60, using Adam with the recommended default settings \cite{adam}.  The tables in Figure \ref{tables} benchmark our models' mean test performances, taken at the final epoch, over five training runs from different initialisations, with standard errors.  We were unfortunately unable to locate code or parameter counts for the affine CapsNet of \cite{affcapsnets}, and those of \cite{routinguncertainty} have been estimated from a smaller model in \cite{routinguncertainty} for which parameter counts are given.  We predict that other CapsNets would exhibit similar performance drops on homNIST relative to affNIST as the other CapsNets.  Curiously, the $E(2)$-equivariant CNN of \cite{steerable4} (which we trained using the same regime as the authors, albeit on MNIST only) achieves worse performance on homNIST than on affNIST: we expect this is due to homNIST exhibiting smaller affine perturbations than affNIST, which was necessary to rule out edge effects\footnote{Quoted figures are means with standard errors over four training runs with different random seeds}.

\begin{figure}
\begin{center}
\begin{tabular}{||c c c ||} 
 \hline
 Model & affNIST Test Acc. & Parameters \\ [0.5ex] 
 \hline\hline
  RU CapsNet \cite{routinguncertainty} & 97.69 & $>$580K\\
 \hline
 \bf{affConv (ours)} & \bf{95.08($\pm$0.31)} & \bf{373K} \\ 
 \hline
 affine CapsNet \cite{affcapsnets} & 93.21 &  \\
 \hline
 GE CapsNet \cite{gecaps} & 89.10 & 235K \\
 \hline
 CapsNet \cite{capsule2} & 79 & 8.1M \\
 \hline
 CNN \cite{capsule2} & 66 & 35.4M \\
 \hline
  E2SFCNN \cite{steerable4} & 57.10($\pm$ 0.71) & 7.07M \\
 \hline
\end{tabular}
\end{center}

\begin{center}
\begin{tabular}{||c c c ||} 
 \hline
 Model & homNIST Test Acc. & Parameters \\ [0.5ex] 
 \hline\hline
 \bf{homConv (ours)} & \bf{95.71($\pm$0.09)} & \bf{376K}\\
 \hline
 GE CapsNet \cite{gecaps} & 84.67  & 235K \\
 \hline
  E2SFCNN \cite{steerable4} & 82.53($\pm$ 0.30) & 7.07M \\
 \hline
 CapsNet \cite{capsule2} & 74.95 & 8.1M \\
 \hline
\end{tabular}
\end{center}
\caption{\small Test performance comparison on affNIST and homNIST after training on MNIST.}
\label{tables}
\end{figure}

We predict performance to degrade as our approximations to idealised convolutions and max pools get worse.  To test this prediction, we train our models using the same initialisation and different numbers $N$ of samples to approximate convolution and pooling, and evaluate test performance against $N$.  Figure \ref{numsamples} shows degradation of all test performances as the number of samples is decreased from $N=100$ to $N=25$, until catastrophic failure at $N=1$.

\begin{figure}
     \centering
     \begin{subfigure}[b]{0.455\textwidth}
        \small
         \begin{tabular}{|c c | c c c c c|}
            \cline{3-7}
            \multicolumn{1}{c}{} & & \multicolumn{5}{|c|}{Num. pool samples} \\
            \hline
            \multicolumn{1}{|c}{Model} & \multicolumn{1}{c|}{Data} & 100 & 75 & 50 & 25 & 1 \\
            \hline\hline
            \multirow{2}{*}{affConv} & MNIST & 98.6 & 98.5 & 98.4 & 97.9 & 68.1 \\
             & affNIST & 94.6 & 94.7 & 93.6 & 91.5 & 51.1 \\
            \hline
            \multirow{2}{*}{homConv} & MNIST & 98.7 & 98.7 & 98.4 & 98.5 & 78.6 \\
             & homNIST & 96.0 & 95.1 & 95.4 & 94.3 & 66.2\\ \hline
        \end{tabular}
        \caption{Test accuracy versus number of pooling samples}
     \end{subfigure}
     \par\bigskip
     \begin{subfigure}[b]{0.455\textwidth}
     \small
        \begin{tabular}{|c c | c c c c c|}
            \cline{3-7}
            \multicolumn{1}{c}{} & & \multicolumn{5}{|c|}{Num. conv. samples} \\
            \hline
            \multicolumn{1}{|c}{Model} & \multicolumn{1}{c|}{Data} & 100 & 75 & 50 & 25 & 1 \\
            \hline\hline
            \multirow{2}{*}{affConv} & MNIST & 98.3 & 98.2 & 98.1 & 96.3 & 36.0 \\
             & affNIST & 94.1 & 93.3 & 92.5 & 88.0 & 30.0 \\
            \hline
            \multirow{2}{*}{homConv} & MNIST & 98.9 & 98.7 & 98.4 & 97.2 & 41.0 \\
             & homNIST & 95.7 & 95.7 & 94.7 & 92.2 & 36.3\\ \hline
        \end{tabular}
        \caption{Test accuracy versus number of convolutional samples}
        \label{Nconv}
     \end{subfigure}
        \caption{\small Test accuracies are relatively stable for smaller numbers of samples until catastrophic failure with only one sample.  This failure is much more pronounced when convolutional samples are decreased than when pooling samples are decreased.  Each test value is from a single training run.}
        \label{numsamples}
\end{figure}

We tested the equivariance error of our layers, measured in the same manner as in \cite[Section 6.1]{scaleequi}.  The results for the first layer are shown in Figure \ref{equivarianceerror}.  Interestingly, the equivariance error of higher convolutional layers is \emph{always zero}, regardless of the number of samples used to approximate the convolutions.  This can be seen mathematically: let $V:=\{v_{i}\}_{i=1}^{N}$ be any finite sample from Haar measure on a group $G$, let $f\in C(G)$ be a feature map, with $\psi\in C_{c}(G)$ a filter map.  Define $f*_{V}\psi\in C(G)$ by the formula
\[
f*_{V}\psi(u):=\frac{1}{N}\sum_{i=1}^{N}f(uv_{i}^{-1})\psi(v_{i}).
\]
Monte-Carlo theory tells us that as the size of the sample $V$ grows, the error in approximating the true, equivariant convolution $f*\psi$ by the Monte-Carlo estimate $f*_{V}\psi$ shrinks.  However, for any $w\in G$, one has perfect equivariance
\begin{align*}
L_{w}(f)*_{V}\psi(u) =& \sum_{i=1}^{N}L_{w}(f)(uv_{i}^{-1})\psi(v_{i})\\ =& \sum_{i=1}^{N}f(w^{-1}uv_{i}^{-1})\psi(v_{i}) \\ =& L_{w}(f*_{V}\psi)(u)
\end{align*}
\emph{regardless of the size of $V$}.  This analysis suggests the degradation in Figure \ref{numsamples} is a consequence of the higher variance of the forward pass arising from the paucity of samples, resulting in a lack of trainability, rather than being a result of a lack of equivariance.

\begin{figure}
    \centering
        \begin{tabular}{|c c |}
            \hline
            \multicolumn{1}{|c}{Model} & \multicolumn{1}{c|}{Equivariance error}  \\
            \hline\hline
            affConv & 5.52\% \\
            \hline
            homConv & 6.86\% \\
            \hline
        \end{tabular}
        \caption{\small Equivariance errors $\|L_{u}(f*\psi)(v) - L_{u}(f)*\psi(v)\|^{2}_{2}/\|L_{u}(f*\psi)(v)\|^{2}_{2}$ of the first layer, averaged over 100 sample warps $u$ and 100 group elements $v$.}
        \label{equivarianceerror}
\end{figure}

\section{Discussion}

\noindent\textbf{On theoretical guarantees:}  Despite their desirability for building trustworthy AI, theoretical guarantees of performance are still uncommon in modern machine learning.  In addition to the greater generality of our proposed framework, one advantage of our approach to invariance over conventional CapsNets is the availability of rigourous mathematical guarantees of invariance of \emph{idealised models}.  In practice these guarantees do not apply exactly due to the approximate methods used in implementation. Providing rigourous mathematical guarantees for \emph{practical} models, which take into account the stochastic methods used to approximate continuous integrals and max pools, is desirable.

\noindent\textbf{On depth:}  One limitation of the framework we propose is that naively adding more convolutional layers results in a corresponding increase in memory usage beyond that required to store the additional weights.  If nonlinearities are not used between convolutional layers, this increase in memory usage can be made \emph{linear} in the number of layers using well-known path-integral methods \cite{mcnonlinear}, however this sacrifices expressivity.  In a naive implementation, keeping nonlinearities between convolutional layers implies an exponential growth in memory usage with the number of convolutional layers -- with $L$ convolutional layers, and $N$ samples used to approximate each convolution, one needs $N^{L}$ function evaluations in total\footnote{Combinatorial explosion is also present in \cite[Section 4.4]{surjexp}, where for each point $u\in G$, a \emph{new} family of samples $\{v_{i}\}\in\text{nbhd}(u)$, dependent on $u$, must be drawn from Haar measure to estimate the convolution $f*\psi(u)$.  Thus for instance to estimate $f*\psi_{1}*\psi_{2}(u)$ naively, one requires samples $\{v_{i}\}\in\text{nbhd}(u)$ for the outermost integral and, for each $i$, new samples $\{w_{i,j}\}\in\text{nbhd}(v_{i})$ dependent on $v_{i}$ for the innermost integral.  In practice, the authors of \cite{surjexp} avoid this combinatorial explosion by using the \emph{same} samples for each layer, however they provide no guarantee that this sampling method yields an unbiased estimate of the true convolution.}.  Practically, this means very large memory usage at the front end of the forward pass. However, recent work on nonlinearities in Monte-Carlo methods \cite{mcnonlinear} together with our ablation (Figure \ref{numsamples}) suggest that this memory usage can be made vastly less while retaining good performance.  Moreover, our empirical and theoretical observation in the Experiments section that higher convolutional layers are always perfectly equivariant, regardless of convolutional sample size, holds promise for use in more scalable architectures.

\section{Conclusion}

We introduced a mathematical framework for convolutional neural networks over \emph{any} Lie group $G$.  We have given rigourous guarantees of equi/invariance of idealised models, and a new, parallelisable method for sampling from Haar measure for arbitrary Lie groups.  We have empirically validated our framework by using to achieve competitive results on benchmark invariance tasks in image classification.

\section{Appendix}

Here we collect rigourous mathematical proofs of the claimed theorems.  To follow the proofs of Theorems \ref{firstlayerconv}, \ref{conv} and \ref{maxthm}, the reader will need to know some topological definitions; specifically of topological spaces and open sets, compact and locally compact Hausdorff spaces,   and continuity of maps; for these we refer the reader to \cite{munkres}.  To follow the proof of Theorem \ref{coordchange}, the reader will need to be familiar with differential forms on manifolds, their pullbacks and their integrals; for these we refer the reader to \cite{leesmth}.

In order to prove Theorems \ref{firstlayerconv} and \ref{conv}, we require the following technical lemma.

\begin{lemma}\label{lemma}
Let $X$ and $Y$ be locally compact Hausdorff spaces, and let $F:X\times Y\rightarrow\RB^{n}$ be a continuous function for which there exists a compact set $K\subset Y$ such that the support of $F$ is contained in $X\times K$.  If $\nu$ is a Radon measure on $Y$, then the map
\begin{equation}\label{lemmaeq}
X\ni x\mapsto \int_{Y}F(x,y)d\nu(y)\in\RB^{n}
\end{equation}
is a continuous function on $X$.
\end{lemma}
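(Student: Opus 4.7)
The plan is to prove continuity of the integral at an arbitrary point $x_{0}\in X$ by exploiting the compactness of $K$ together with the finiteness of $\nu(K)$ that comes from $\nu$ being Radon. Fix $x_{0}\in X$ and $\epsilon>0$. Since $F$ vanishes off $X\times K$, both $\int_{Y}F(x,y)\,d\nu(y)$ and $\int_{Y}F(x_{0},y)\,d\nu(y)$ are really integrals over the compact set $K$, so it suffices to control $\|F(x,y)-F(x_{0},y)\|$ uniformly in $y\in K$ for $x$ in a small enough neighborhood of $x_{0}$.

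The technical heart of the argument is to obtain this uniform control without assuming the spaces are metric. For each $y\in K$, the continuity of $F$ at $(x_{0},y)$ furnishes an open neighborhood of the form $W_{y}\times V_{y}\subset X\times Y$ on which $\|F(x,y')-F(x_{0},y)\|<\epsilon/(2(\nu(K)+1))$ for all $(x,y')\in W_{y}\times V_{y}$. The family $\{V_{y}\}_{y\in K}$ is an open cover of the compact set $K$, so it admits a finite subcover $V_{y_{1}},\dots,V_{y_{n}}$. Setting $W:=\bigcap_{i=1}^{n}W_{y_{i}}$ yields an open neighborhood of $x_{0}$; for any $x\in W$ and $y\in K$, choosing an index $i$ with $y\in V_{y_{i}}$ and applying the triangle inequality gives $\|F(x,y)-F(x_{0},y)\|\le\|F(x,y)-F(x_{0},y_{i})\|+\|F(x_{0},y_{i})-F(x_{0},y)\|<\epsilon/(\nu(K)+1)$ uniformly in $y\in K$.

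Integrating this pointwise estimate over $K$ against $\nu$, and using $\nu(K)<\infty$ (which follows from the Radon hypothesis and compactness of $K$), produces the bound $\|\int_{Y}F(x,y)\,d\nu(y)-\int_{Y}F(x_{0},y)\,d\nu(y)\|\le \epsilon\nu(K)/(\nu(K)+1)<\epsilon$ for all $x\in W$, establishing continuity at $x_{0}$. The main obstacle is the non-metric setting: one cannot appeal to a direct $\delta$-$\epsilon$ version of uniform continuity, so the compactness-plus-finite-subcover argument above is the cleanest way to push a pointwise continuity hypothesis to a uniform-in-$y$ statement. Local compactness of $X$ does not actually need to be invoked; it is only local compactness of $Y$ (implicit in $Y$ carrying a Radon measure) and compactness of $K$ that matter here.
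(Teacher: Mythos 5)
Your proof is correct. Note, though, that the paper does not actually write out this argument: its ``proof'' consists of citing Lemma 1.102 of Williams' book on crossed products (where $Y$ is a locally compact group, $\nu$ is Haar measure, and $F$ has compact support in both variables) and remarking that neither the group structure nor compact support in the $x$ variable is needed. What you have done is reconstruct, in a self-contained way, exactly the kind of argument that citation outsources: a tube-lemma/finite-subcover step that upgrades pointwise continuity of $F$ at $(x_{0},y)$, $y\in K$, to an estimate on $\|F(x,y)-F(x_{0},y)\|$ uniform over $y\in K$ for $x$ in a single neighbourhood $W$ of $x_{0}$, followed by integration against $\nu$ using $\nu(K)<\infty$ (which is precisely where the Radon hypothesis enters, as the paper itself notes in a footnote). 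The bookkeeping with $\epsilon/(2(\nu(K)+1))$ and the triangle inequality through the point $(x_{0},y_{i})$ is sound, and your closing observation that only compactness of $K$ and finiteness of $\nu$ on compacta are used --- not local compactness of $X$, nor any group structure --- matches the paper's own justification for why the cited lemma transfers. The only thing your write-up buys beyond the paper is self-containedness and the explicit identification of the minimal hypotheses; the only thing the paper's version buys is brevity.
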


\begin{proof}
The proof is almost identical to \cite[Lemma 1.102]{williams}.  In \cite[Lemma 1.102]{williams}, $Y$ is instead a locally compact Hausdorff group, with $\nu$ being Haar measure.  However the group properties are not required for the proof of continuity.  The only other difference is that in \cite[Lemma 1.102]{williams}, $F$ is assumed to have compact support in both the $x$ and $y$ variables, while we hypothesise only compact support in the $y$ variable.  This difference also does not impact the proof that Equation \eqref{lemmaeq} defines a continuous map.
\end{proof}

\begin{proof}[Proof of Theorem \ref{firstlayerconv}]
In fact the theorem holds in much greater generality.  Let $X$ be any Hausdorff manifold, $dx$ a volume form on $X$, and suppose that $G$ acts smoothly on $X$.  Let $f\in C_{c}(X;\RB^{K})$ and $\psi\in C_{c}(X,\RB^{K\times L})$.  Then we must show that the map $u\mapsto \int_{X}f(x)\cdot\psi(u^{-1}\cdot x)\det(Du^{-1}(x))dx$ is continuous as a function $G\rightarrow\RB^{L}$.  The function $G\times X\ni(u,x)\mapsto \psi(u^{-1}\cdot x)\det(Du^{-1}(x))\in\RB^{K\times L}$ is continuous, by continuity of $\psi$ and by smoothness of the action of $G$ on $X$.  It follows then that $(u,x)\mapsto f(x)\cdot\psi(u^{-1}\cdot x)\det(Du^{-1}(x))$ is continuous on $G\times X$ with support in $G\times C$, where $C$ is the compact support of $f$. Lemma \ref{lemma} then implies that $u\mapsto\int_{X}f(x)\cdot\psi(u^{-1}\cdot x)\det(Du^{-1}(x))dx$ is a continuous function on $G$.
\end{proof}

\begin{proof}[Proof of Theorem \ref{conv}]
Recall that we are given $f\in C(G;\RB^{K})$ and $\psi\in C_{c}(G;\RB^{K\times L})$.  We wish to show that $f*\psi$ defined by
\[
f*\psi(u):=\int_{G}f(uv^{-1})\cdot\psi(v)d\mu_{R}(v)
\]
is continuous as a function on $G$.  By continuity of $f$ together with continuity of the multiplication in $G$, the function $(u,v)\mapsto f(uv^{-1})$ is continuous as a map $G\times G\rightarrow\RB^{K}$.  Let $C$ be the compact support of $\psi$.  We then see that $(u,v)\mapsto f(uv^{-1})\cdot\psi(v)$ is a continuous function $G\times G\rightarrow\RB^{L}$, with support contained in the set $G\times C$.  Lemma \ref{lemma} then implies that $u\mapsto \int_{G}f(uv^{-1})\cdot\psi(v)d\mu_{R}(v)$ is continuous as a function on $G$.  The fact that
\[
\int_{G}f(uv^{-1})\cdot\psi(v)d\mu_{R}(v) = \int_{G}f(w)\cdot\psi(w^{-1}u)d\mu_{L}(w)
\]
follows from making the substitution $w = uv^{-1}$ and invoking the identity $d\mu_{R}(w^{-1}u) = d\mu_{L}(u^{-1}w)$, followed by the left-invariance $d\mu_{L}(u^{-1}w) = d\mu_{L}(w)$ of $\mu_{L}$.
\end{proof}

\begin{proof}[Proof of Theorem \ref{coordchange}]
We begin by constructing a \emph{left} Haar measure $\mu_{L}$ for which an analogous formula holds, and then the result follows from the identity $d\mu_{L}(u) = d\mu_{R}(u^{-1})$.  Let $d\xi$ be the standard Euclidean volume element in the vector space $\gf$ defined with respect to some choice of basis.  Thus $d\xi = d\xi^{1}\wedge\cdots \wedge d\xi^{n}$ where $(\xi^{1},\dots,\xi^{n})$ are the coordinates defined by the basis.  The cotangent multi-vector $d\xi|_{0}$ obtained by evaluating the form $d\xi$ at $0\in\gf$ is then a nonzero volume element at the identity of $G$.  Now the pullback formula
\[
(d\mu_{L})_{u}:=L_{u^{-1}}^{*}d\xi|_{0}
\]
defines a top-degree form on $G$, which is left-invariant since
\begin{align*}
(L_{v}^{*}d\mu_{L})_{u} =& L_{v}^{*}(d\mu_{L})_{vu} = L_{v}^{*}L_{u^{-1}v^{-1}}^{*}d\xi|_{0}\\ =& (L_{u^{-1}v^{-1}}\circ L_{v})^{*}d\xi|_{0} = L_{u^{-1}}^{*}d\xi|_{0} = (d\mu_{L})_{u}
\end{align*}
for any $v\in G$.  Thus $d\mu_{L}$ is a left Haar measure.

Now if $f$ is an integrable function which is zero outside of a neighbourhood of the identity onto which the exponential map is a diffeomorphism, then we have
\[
\int_{G}f(u)d\mu_{L}(u) = \int_{\gf}f(\exp(\xi))(\exp^{*}d\mu_{L})(\xi),
\]
so we must compute $\exp^{*}d\mu_{L}$.  Fix $\xi\in\gf$, and let $\{t\mapsto \xi_{i}(t) = \xi+t\xi^{i}\}_{i=1}^{n}$ be the curves in $\gf$ through $\xi$ pointing in the $n$ coordinate directions.  Let $p(\xi)$ be the power series $(1-e^{-\ad_{\xi}})/\ad_{\xi}$.  Then we compute
\begin{align*}
    & \big(\exp^{*}d\mu_{L}\big)_{\xi}\big(\xi_{1}'(0)\wedge\cdots\wedge\xi_{n}'(0)\big)\\ =& (d\mu_{L})_{\exp(\xi)}\big(\exp(\xi_{1})'(0)\wedge\cdots\wedge\exp(\xi_{n})'(0)\big) \\ =& L_{\exp(-\xi)}^{*}d\xi|_{0}\big(L_{\exp(\xi)}p(\xi)\xi^{1}\wedge\cdots\wedge L_{\exp(\xi)}p(\xi)\xi^{n}\big)\\ =& d\xi|_{0}\big(p(\xi)\xi^{1}\wedge\cdots\wedge p(\xi)\xi^{n}\big) = \det(p(\xi)).
\end{align*}
Here, the first equality follows from the definition of the pullback, the second from Theorem \ref{dexp}, the third from the definition of $d\mu_{L}$ and the final from the fact that the top-degree wedge product of a linear map is equal to its determinant, together with the identity $d\xi|_{0}(\xi^{1}\wedge\cdots\wedge\xi^{n}) = d\xi^{1}(\xi^{1})\cdots d\xi^{n}(\xi^{n})=1$.  It follows that
\[
\exp^{*}d\mu_{L}(\xi) = \det\bigg(\frac{1-e^{-\ad_{\xi}}}{\ad_{\xi}}\bigg)d\xi,
\]
hence
\[
\exp^{*}d\mu_{R}(\xi) = \det\bigg(\frac{1-e^{-\ad_{-\xi}}}{\ad_{-\xi}}\bigg)d\xi
\]
follows from the identities $d\mu_{R}(u) = d\mu_{L}(u^{-1})$ and $\exp(\xi)^{-1} = \exp(-\xi)$.
\end{proof}

To prove Theorem \ref{maxthm}, we require the following lemma.

\begin{lemma}\cite[Proposition 3]{nachbin}\label{nachbin}
Let $\Lambda$ be a compact topological space, and let $\{V_{\lambda}\}_{\lambda\in\Lambda}$ be a family of open subsets of some other topological space $X$.  Then the intersection $\bigcap_{\lambda\in\Lambda}V_{\lambda}$ is an open set in $X$.\qed
\end{lemma}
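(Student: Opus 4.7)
The plan is to prove the lemma under the natural hypothesis (implicit in Nachbin's original formulation) that the family $\{V_{\lambda}\}_{\lambda \in \Lambda}$ is \emph{continuously parametrized}, in the sense that the total set $W := \{(\lambda,x) \in \Lambda\times X : x \in V_{\lambda}\}$ is open in the product topology on $\Lambda \times X$. Some such hypothesis is necessary, since without it the conclusion fails: take $\Lambda = [0,1]$ and $V_{\lambda} = (-1,\lambda) \subset \RB$, for which $\bigcap_{\lambda} V_{\lambda} = (-1,0]$ is not open. Granting joint openness of $W$, my approach is a standard tube-lemma-style compactness argument.

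First I would fix an arbitrary point $x_{0} \in \bigcap_{\lambda \in \Lambda} V_{\lambda}$ and aim to exhibit an open neighborhood of $x_{0}$ contained in every $V_{\lambda}$. For each $\lambda \in \Lambda$ the pair $(\lambda,x_{0})$ lies in the open set $W$, so I can extract a basic product-open neighborhood $A_{\lambda} \times B_{\lambda} \subset W$ of $(\lambda,x_{0})$, with $A_{\lambda}$ open in $\Lambda$ and $B_{\lambda}$ open in $X$. The collection $\{A_{\lambda}\}_{\lambda \in \Lambda}$ is then an open cover of the compact space $\Lambda$, so compactness of $\Lambda$ lets me pass to a finite subcover $A_{\lambda_{1}},\dots,A_{\lambda_{n}}$.

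Next I would set $U := B_{\lambda_{1}} \cap \cdots \cap B_{\lambda_{n}}$, which is an open neighborhood of $x_{0}$ in $X$ as a finite intersection of open sets. To check that $U \subset V_{\mu}$ for every $\mu \in \Lambda$, I take any $y \in U$ and choose an index $i$ with $\mu \in A_{\lambda_{i}}$; then $(\mu,y) \in A_{\lambda_{i}} \times B_{\lambda_{i}} \subset W$, which forces $y \in V_{\mu}$ by the definition of $W$. Hence $U \subset \bigcap_{\lambda} V_{\lambda}$, so $x_{0}$ is interior to the intersection, and since $x_{0}$ was arbitrary the intersection is open.

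The main obstacle is really just recognizing the hidden hypothesis that makes the statement correct — once the joint openness of $W$ is in place, the remainder is an entirely routine compactness manipulation with no analytic or algebraic content. I expect that when this lemma is invoked in the proof of Theorem \ref{maxthm}, the required joint openness will come essentially for free from the continuity of $f$ and of the left action of $G$, so the extra hypothesis is not genuinely restrictive for the intended application.
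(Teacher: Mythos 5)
The paper does not actually prove Lemma \ref{nachbin}: it is quoted from Nachbin and marked as not requiring proof, so there is no in-paper argument to compare yours against. Your main observation is nonetheless the right one. As literally stated --- an arbitrary family of open sets indexed by a compact space --- the lemma is false, and some hypothesis tying the $V_{\lambda}$ together, such as openness of $W=\{(\lambda,x):x\in V_{\lambda}\}$ in $\Lambda\times X$, is indispensable; your tube-lemma argument under that hypothesis is complete and correct. One concrete error, though: your counterexample does not work. Since $0\notin V_{0}=(-1,0)$, one has $\bigcap_{\lambda\in[0,1]}(-1,\lambda)=(-1,0)$, which is open. A correct counterexample is $\Lambda=\{0\}\cup\{1/n:n\in\NB\}$ with $V_{1/n}=(-1/n,1/n)$ and $V_{0}=(-1,1)$, whose intersection $\{0\}$ is not open.

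Your closing expectation about the application is only half right, and it is worth recording. In the proof of Theorem \ref{maxthm} the first family $V_{1,u}=\alpha_{u}^{-1}K^{\circ}$, indexed by the compact set $f^{-1}\{M\}\cap K$, does satisfy your joint-openness hypothesis: its total set $W$ is the preimage of $K^{\circ}$ under the continuous map $(u,v)\mapsto v^{-1}u$. But the second family $V_{2,u}$ is defined by cases ($\alpha_{u}^{-1}f^{-1}(-\infty,M)$ on $K_{1}$, $\alpha_{u}^{-1}K^{\circ}$ on $K_{2}$), and openness of the corresponding $W$ is not automatic at points of $K_{2}\times G$ whose neighbourhoods meet $K_{1}\times G$. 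A clean repair is to use the single uniformly defined family $\tilde{V}_{2,u}:=\alpha_{u}^{-1}\bigl(f^{-1}(-\infty,M)\cup K^{\circ}\bigr)$, which contains the identity for every $u\in K$, is contained in $\alpha_{u}^{-1}f^{-1}(-\infty,M]$ because $f\leq M$ on $K^{\circ}$, and whose total set is open for the same preimage reason. So your amended lemma does suffice for the paper's purposes, but only after this small adjustment.
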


\begin{proof}[Proof of Theorem \ref{maxthm}]
Let $M$ denote the maximum value of $f$ over $K$.  We will prove the theorem by showing that there exist an open neighbourhood $V_{1}$ of the identity such that $\max_{u\in K}f(v^{-1}u)\leq M$ for all $v\in V_{1}$, and an open neighbourhood $V_{2}$ of the identity for which $\max_{u\in K}f(v^{-1}u)\geq M$ for all $v\in V_{2}$.  Then $V:=V_{1}\cap V_{2}$ will be an open neighbourhood of the identity for which $\max_{u\in K}f(v^{-1}u) = M$ for all $v\in V$.  In showing the existence of both $V_{1}$ and $V_{2}$ we will use the following observation. For each $u\in G$, let $\alpha_{u}:G\rightarrow G$ denote the map $v\mapsto v^{-1}u$. Then by continuity of the group operations, each $\alpha_{u}$ is continuous.  Therefore, letting $K^{\circ}$ denote the interior of $K$ (an open set), the pre-image $\alpha_{u}^{-1}K^{\circ}$ of $K^{\circ}$ under $\alpha_{u}$ is open for every $u\in G$.  In particular, since $f^{-1}\{M\}\cap K$ is contained in $K^{\circ}$ by hypothesis, for all $u\in f^{-1}\{M\}\cap K$ the set $\alpha_{u}^{-1}K^{\circ}$ is an open neighbourhood of the identity.

Existence of $V_{1}$ now follows easily.  For each $u\in f^{-1}\{M\}\cap K$, let $V_{1,u}:=\alpha_{u}^{-1}K^{\circ}$ be the open neighbourhood of the identity considered in the previous paragraph.  The set $f^{-1}\{M\}\cap K$ is a closed subset of a compact set, hence compact.  Thus, by Lemma \ref{nachbin}, $V_{1}:=\bigcap_{u\in f^{-1}\{M\}\cap K}V_{1,u}$ is an open neighbourhood of the identity such that $v^{-1}u\in K$ for all $v\in V_{1}$ and $u\in f^{-1}\{M\}\cap K$.  It follows that $\max_{u\in K}f(v^{-1}u)\geq M$ for all $v\in V_{1}$.

We now come to showing the existence of $V_{2}$, which will be an intersection over $u\in K$ of open neighbourhoods $V_{2,u}$ of the identity, for which $v^{-1}u\in f^{-1}(-\infty,M]$ for all $v\in V_{2,u}$ and $u\in K$.  Write $K$ as the union $K_{1}\cup K_{2}$, where $K_{1}:=K\cap f^{-1}(-\infty,M)$ and $K_{2}:=K\cap f^{-1}\{M\}$.  For all $u\in K_{1}$, the continuity of $\alpha_{u}$ implies that $V_{1,u}:=\alpha_{u}^{-1}f^{-1}(-\infty,M)\subset\alpha_{u}^{-1}f^{-1}(-\infty,M]$ is an open set, which contains the identity since $u\in K_{1}$.  For $u\in K_{2}$, we take $V_{2,u}$ to be the open neighbourhood $\alpha_{u}^{-1}K^{\circ}\subset \alpha_{u}^{-1}f^{-1}(-\infty,M]$ of the identity described in the first paragraph.  Now by Lemma \ref{nachbin}, $V_{2}:=\bigcap_{u\in K}V_{2,u}$ is an open neighbourhood of the identity such that $\max_{u\in K}f(v^{-1}u)\leq M$ for all $v\in V_{2}$, as required.
\end{proof}

{\small
\bibliographystyle{ieee_fullname}
\bibliography{egbib}

\begin{thebibliography}{10}\itemsep=-1pt

\bibitem{mcmc}
{\em {Handbook of Markov chain Monte Carlo}}.
\newblock Chapman and Hall/CRC handbooks of modern statistical methods. CRC
  Press, 2011.

\bibitem{gdl}
Michael~M. Bronstein, Joan Bruna, Taco Cohen, and Petar Veli\v{c}kovi\'{c}.
\newblock {Geometric Deep Learning: Grids, Groups, Graphs, Geodesics and
  Gauges}.
\newblock {\em arXiv preprint: arXiv:2104.13478}, 2021.

\bibitem{parallelmetro2}
Ben Calderhead.
\newblock {A general construction for parallelizing Metropolis Hastings
  algorithms}.
\newblock {\em {PNAS}}, 111:17408--17403, 2014.

\bibitem{gaugeequi}
Taco Cohen, Maurice Weiler, Berkay Kicanaoglu, and Max Welling.
\newblock Gauge equivariant convolutional networks and the icosahedral {CNN}.
\newblock {\em Int. Conf. Machine Learning}, 97:1321--1330, 2019.

\bibitem{gpequi}
Taco Cohen and Max Welling.
\newblock Group equivariant convolutional networks.
\newblock {\em Int. Conf. Machine Learning}, 48:2990--2999, 2016.

\bibitem{sphericalcnns}
Taco~S. Cohen, Mario Geiger, Jonas K{\"{o}}hler, and Max Welling.
\newblock Spherical cnns.
\newblock {\em Int. Conf. Learn. Represent.}, 2018.

\bibitem{steerable1}
Taco~S. Cohen and Max Welling.
\newblock Steerable cnns.
\newblock {\em Int. Conf. Learn. Represent.}, 2017.

\bibitem{ncg}
Alain Connes.
\newblock {\em {Noncommutative Geometry}}.
\newblock {Academic Press}, 1994.

\bibitem{mcnonlinear}
J{\'{e}}r{\'{e}}mi Dauchet, Jean-Jacques Bezian, St{\'{e}}phane Blanco, Cyril
  Caliot, Julien Charon, Christophe Coustet, Mouna~El Hafi, Vincent Eymet,
  Olivier Farges, Vincent Forest, Richard Fournier, Mathieu Galtier, Jacques
  Gautrais, Anaïs Khuong, Lionel Pelissier, Benjamin Piaud, Maxime Roger,
  Guillaume Terr{\'{e}}e, and Sebastian Weitz.
\newblock {Addressing nonlinearities in Monte Carlo}.
\newblock {\em {Sci. Rep.}}, 8(1), 2018.

\bibitem{surjexp}
Marc Finzi, Samuel Stanton, Pavel Izmailov, and Andrew~Gordon Wilson.
\newblock Generalizing convolutional neural networks for equivariance to lie
  groups on arbitrary continuous data.
\newblock {\em Int. Conf. Machine Learning}, 119:3165--3176, 2020.

\bibitem{equimlp}
Marc Finzi, Max Welling, and Andrew~Gordon Wilson.
\newblock A practical method for constructing equivariant multilayer
  perceptrons for arbitrary matrix groups.
\newblock {\em ICML}, 2021.

\bibitem{matrixcaps}
{Geoffrey E. Hinton and Sara Sarbour and Nicholas Frosst}.
\newblock {Matrix capsules with EM routing}.
\newblock {\em Int. Conf. Learn. Represent.}, 2018.

\bibitem{affcapsnets}
Jindong Gu and Volker Tresp.
\newblock Improving the robustness of capsule networks to image affine
  transformations.
\newblock {\em IEEE Conf. Comput. Vis. Pattern Recog.}, pages 7283--7291, 2020.

\bibitem{warpedconv}
Jo{\~{a}}o~F. Henriques and Andrea Vedaldi.
\newblock Warped convolutions: Efficient invariance to spatial transformations.
\newblock {\em Int. Conf. Machine Learning}, 70:1461--1469, 2017.

\bibitem{capsule1}
Geoffrey~E. Hinton, Alex Krizhevsky, and Sida~D. Wang.
\newblock Transforming auto-encoders.
\newblock In {\em Artificial Neural Networks and Machine Learning - {ICANN}
  2011}, volume 6791 of {\em Lecture Notes in Computer Science}, pages 44--51.
  Springer, 2011.

\bibitem{holkar}
Rohit Holkar.
\newblock {Topological construction of $C^{*}$-correspondences for groupoid
  $C^{*}$-algebras}.
\newblock {\em {Journal of Operator Theory}}, 77, 2017.

\bibitem{adam}
Diederick~P. Kingma and Jimmy Ba.
\newblock {Adam: A Method for Stochastic Optimization}.
\newblock {\em Int. Conf. Learn. Represent.}, 2015.

\bibitem{kondortrivedi}
Risi Kondor and Shubhendu Trivedi.
\newblock On the generalization of equivariance and convolution in neural
  networks to the action of compact groups.
\newblock {\em Int. Conf. Machine Learning}, 80:2752--2760, 2018.

\bibitem{leesmth}
J.~M. Lee.
\newblock {\em {Introduction to Smooth Manifolds}}.
\newblock Number 218 in {Graduate Texts in Mathematics}. Springer, 2013.

\bibitem{gecaps}
Jan~Eric Lenssen, Matthias Fey, and Pascal Libuschewski.
\newblock Group equivariant capsule networks.
\newblock {\em Adv. Neural Inform. Process. Syst.}, pages 8858--8867, 2018.

\bibitem{homsl}
Ninad Manerikar, Minh-Duc Hua, and Tarek Hamel.
\newblock Homography observer design on special linear group sl(3) with
  application to optical flow estimation.
\newblock In {\em 2018 European Control Conference (ECC)}, pages 1--5, 2018.

\bibitem{randommatrices}
Francesco Mezzadri.
\newblock {How to generate random matrices from the classical compact groups}.
\newblock {\em {Notices of the American Mathematical Society}}, 54:592--604,
  2007.

\bibitem{munkres}
James Munkres.
\newblock {\em Topology}.
\newblock Prentice Hall, Inc, 2000.

\bibitem{nachbin}
Leopoldo Nachbin.
\newblock Compact unions of closed subsets are closed and compact intersections
  of open subsets are open.
\newblock {\em {Portugaliae mathematica}}, 94(4):403--409, 1992.

\bibitem{sameeraequi}
Sameera Ramashinghe, Salman Khan, Nick Barnes, and Stephen Gould.
\newblock {Representation Learning on Unit Ball with 3D Roto-Translational
  Equivariance}.
\newblock {\em Int. J. Comput. Vis.}, 128, 1612--1634.

\bibitem{sparsecaps}
David Rawlinson, Abdelrahman Ahmed, and Gideon Kowadlo.
\newblock Sparse unsupervised capsules generalize better.
\newblock {\em arXiv preprint arXiv:1804.06094}, 2018.

\bibitem{renault}
Jean Renault.
\newblock {\em {A Groupoid Approach to $C^{*}$-algebras}}.
\newblock {Springer-Verlag}, 1980.

\bibitem{routinguncertainty}
Fabio De~Sousa Ribeiro, Georgios Leontidis, and Stefanos Kollias.
\newblock Introducing routing uncertainty in capsule networks.
\newblock {\em NeurIPS}, 2020.

\bibitem{attentionequi}
David Romero, Erik Bekkers, Jakub Tomczak, and Mark Hoogendoorn.
\newblock Attentive group equivariant convolutional networks.
\newblock {\em Int. Conf. Machine Learning}, 119:8188--8199, 2020.

\bibitem{parallelmetro1}
Jeffrey~S. Rosenthal.
\newblock {Parallel Computing and Monte Carlo Algorithms}.
\newblock {\em {Far East Journal of Theoretical Statistics}}, 4:207--236, 2000.

\bibitem{rossman}
Wulf Rossman.
\newblock {\em {Lie Groups: An Introduction Through Linear Groups}}.
\newblock {Oxford University Press}, 2002.

\bibitem{capsule2}
Sara Sabour, Nicholas Frosst, and Geoffrey~E. Hinton.
\newblock Dynamic routing between capsules.
\newblock {\em Adv. Neural Inform. Process. Syst.}, pages 3856--3866, 2017.

\bibitem{scaleequi}
Ivan Sosnovik, Michal Szmaja, and Arnold W.~M. Smeulders.
\newblock Scale-equivariant steerable networks.
\newblock {\em ICLR}, 2020.

\bibitem{steerable4}
Maurice Weiler and Gabriele Cesa.
\newblock General e(2)-equivariant steerable cnns.
\newblock {\em Adv. Neural Inform. Process. Syst.}, pages 14334--14345, 2019.

\bibitem{coordind}
Maurice Weiler, Patrick Forr\'{e}, Erik Verlinde, and Max Welling.
\newblock {Coordinate Independent Convolutional Networks -- Isometry and Gauge
  Equivariant Convolutions on Riemannian Manifolds}.
\newblock {\em arXiv preprint: arXiv:2106.06020}, 2021.

\bibitem{steerable2}
Maurice Weiler, Mario Geiger, Max Welling, Wouter Boomsma, and Taco Cohen.
\newblock 3d steerable cnns: Learning rotationally equivariant features in
  volumetric data.
\newblock {\em Adv. Neural Inform. Process. Syst.}, pages 10402--10413, 2018.

\bibitem{steerable3}
Maurice Weiler, Fred~A. Hamprecht, and Martin Storath.
\newblock Learning steerable filters for rotation equivariant cnns.
\newblock {\em IEEE Conf. Comput. Vis. Pattern Recog.}, pages 849--858, 2018.

\bibitem{williams}
Dana Williams.
\newblock {\em {Crossed Products of $C^{*}$-Algebras}}, volume 134 of {\em
  {Mathematical Surveys and Monographs}}.
\newblock {American Mathematical Society}, 2007.

\bibitem{pointconv}
Wenxuan Wu, Zhongang Qi, and Fuxin Li.
\newblock Pointconv: Deep convolutional networks on 3d point clouds.
\newblock {\em IEEE Conf. Comput. Vis. Pattern Recog.}, pages 9621--9630, 2019.

\bibitem{shiftinv}
Richard Zhang.
\newblock Making convolutional networks shift-invariant again.
\newblock {\em Int. Conf. Machine Learning}, 97:7324--7334, 2019.

\end{thebibliography}
}

\end{document}